\documentclass{article}
\usepackage{graphicx} 

\usepackage{times}
\usepackage{soul}
\usepackage{url}
\usepackage[hidelinks]{hyperref}
\usepackage[utf8]{inputenc}
\usepackage[small]{caption}
\usepackage{graphicx}
\usepackage{amsmath}
\usepackage{amssymb}
\usepackage{mathtools}
\usepackage{amsthm}
\usepackage{booktabs}
\usepackage{algorithm}
\usepackage{algorithmic}
\usepackage{multirow}
\usepackage[switch]{lineno}

\usepackage{subfigure}
\usepackage{color}
\usepackage{authblk}

\newtheorem{proposition}{Proposition}

\begin{document}

\title{Value-Guidance MeanFlow for Offline Multi-Agent Reinforcement Learning}

\author[1]{Teng Pang}
\author[1]{Zhiqiang Dong}
\author[1]{Yan Zhang}
\author[1]{Rongjian Xu}
\author[1 2]{Guoqiang Wu \textsuperscript{*}} 
\author[1]{Yilong Yin \textsuperscript{*}} 

\affil[*]{{\small Corresponding Author}}

\affil[1]{School of Software, Shandong University, Jinan, China}
\affil[2]{Luke EI, Jinan, China

silencept7@gmail.com, zhiqiangdong12@gmail.com, yannzhang9@gmail.com, xurongjian50@gmail.com, guoqiangwu90@gmail.com, ylyin@sdu.edu.cn}


\maketitle

\begin{abstract}
    Offline multi-agent reinforcement learning (MARL) aims to learn the optimal joint policy from pre-collected datasets, requiring a trade-off between maximizing global returns and mitigating distribution shift from offline data. 
    Recent studies use diffusion or flow generative models to capture complex joint policy behaviors among agents; however, they typically rely on multi-step iterative sampling, thereby reducing training and inference efficiency.
    Although further research improves sampling efficiency through methods like distillation, it remains sensitive to the behavior regularization coefficient. 
    To address the above-mentioned issues, we propose Value Guidance Multi-agent MeanFlow Policy (VGM$^2$P), a simple yet effective flow-based policy learning framework that enables efficient action generation with coefficient-insensitive conditional behavior cloning.
    Specifically, VGM$^2$P uses global advantage values to guide agent collaboration, treating optimal policy learning as conditional behavior cloning.
    Additionally, to improve policy expressiveness and inference efficiency in multi-agent scenarios, it leverages classifier-free guidance MeanFlow for both policy training and execution.
    Experiments on tasks with both discrete and continuous action spaces demonstrate that, even when trained solely via conditional behavior cloning, VGM$^2$P efficiently achieves performance comparable to state-of-the-art methods.
\end{abstract}

\section{Introduction}

Multi-agent reinforcement learning (MARL)~\cite{oliehoek2008optimal,rashid2020monotonic,zhang2021multi} is primarily applied to multi-agent system tasks in real-world scenarios, such as multi-player strategy games~\cite{carroll2019utility}, multi-robot control~\cite{peng2021facmac}, and traffic control~\cite{wiering2000multi}. 
The key challenge is how to effectively express powerful policies and facilitate communication among agents during interactions with the environment, thereby maximizing the overall reward of the system.
However, due to the complexity of the real world, real-time interaction with the environment often involves risks and high costs, especially in large-scale tasks. Therefore, offline MARL~\cite{yang2021believe,pmlr-v162-pan22a}, which leverages pre-collected data for multi-agent policy learning, has gradually gained increasing attention.

Similar to single-agent offline RL, offline MARL faces a series of distribution shift challenges. 
First, the limited and insufficient coverage of offline data makes agents more likely to access out-of-distribution (OOD) data during training.
This issue becomes even more challenging as the number of agents grows.
Additionally, the absence of real-time interaction with the environment hampers the proper exploration of the learned policies, thereby exacerbating extrapolation errors.
Beyond these challenges, another key issue is how to effectively mine and utilize the communication between agents from the offline dataset.

To address these challenges, existing research integrates the regularization methods from single-agent offline RL into the Centralized Training with Decentralized Execution (CTDE) framework~\cite{yang2021believe,pmlr-v162-pan22a,shao2023counterfactual,wang2023offline}. This approach ensures communication between agents while limiting OOD data access and mitigating extrapolation errors.
Additionally, recent studies incorporate the impact of agents' balance on policy learning, using sequential policy updates to further restrict OOD data access and extrapolation~\cite{matsunaga2023alberdice,liu2025offline,qiao2025offline}.
While these methods effectively mitigate distribution shifts and communication collaboration issues in multi-agent systems, the commonly used Gaussian policy fails to capture the multi-modal nature of the joint policy, thereby constraining the policy expressiveness of the agents and the scope of their applications. 

With the recent development of generative models, some studies apply models like diffusion~\cite{ho2020denoising} and flow matching~\cite{lipman2023flow} to offline MARL, particularly in policy modeling~\cite{li2024beyond,li2025dof} and trajectory generation~\cite{zhu2024madiff}. 
Although these models are powerful, their complex sampling processes incur high generation costs, and the generated actions cannot be directly used for policy updates. 
Besides, some research studies one-step distillation~\cite{lee2025multi} or one-step generative models~\cite{li2025om2p}, such as MeanFlow~\cite{geng2025mean}, as a behavioral regularization method to efficiently sample and generate optimal actions, but such approaches are highly sensitive to the exploration-exploitation trade-off and heavily dependent on the regularization coefficient. 


To address the aforementioned issues, we propose a simple offline multi-agent policy learning method, \textbf{V}alue \textbf{G}uidance \textbf{M}ulti-agent \textbf{M}eanflow \textbf{P}olicy(\textbf{VGM$^2$P}), which uses the advantage value as guidance and treats training the optimal policy as conditional behavior cloning.
In the training phase, to reduce sensitivity to the exploration–exploitation coefficient,
VGM$^2$P calculates the global advantage value of offline data and integrates it into MeanFlow-based individual policies training with classifier-free guidance (CFG). 
For decentralized execution, to enhance exploration of the learned policies and the efficiency of action generation,
VGM$^2$P generates actions for each agent through one-step sampling based on a preset condition.
Experimentally, we apply VGM$^2$P to general offline MARL benchmarks, and a series of experiments show that VGM$^2$P, using only conditional behavior cloning, performs comparably to existing advanced methods. 

Our contributions are summarized as follows:
\begin{itemize}
    \item We propose VGM$^2$P, a simple yet effective multi-agent training method that trains the optimal joint policy through conditional behavior cloning.
    \item To enhance policy expressiveness and action generation efficiency, we leverage the classifier-free guidance MeanFlow for condition-based behavior cloning. 
    \item To enable agent collaboration, we incorporate the global advantage value as a guidance condition into conditional behavior cloning.
    \item Experimental results demonstrate that, in both discrete and continuous action environments, our method efficiently achieves performance comparable to existing advanced algorithms.
\end{itemize}

\section{Preliminaries}
\subsection{Problem setup}

In this work, we model multi-agent reinforcement learning~(MARL) as a decentralized partially observable Markov decision process (Dec-POMDP) represented by a tuple $\mathcal{M}=(\mathcal{I}, \mathcal{S}, \{ \mathcal{O}^i \}_{i=1}^{N}, \{ \mathcal{A}^i \}_{i=1}^{N}, \Omega, \mathcal{T}, R, \gamma, \rho_0)$. Here, $\mathcal{I}=\{ 1,2,\cdot \cdot \cdot, N\}$ denotes a set of agents; $\mathcal{S}$ denotes the global state space; $\mathcal{O}^i$ and $\mathcal{A}^i$ denote the observation space and action space of the agent $i \in \mathcal{I}$, $\mathbf{A} = \mathcal{A}^1 \times... \times \mathcal{A}^N$ denotes the joint action space and $\mathbf{a}= (a^1, ...,a^N )\in \mathbf{A}$ denotes the joint action, $\mathbf{O}$ and $\mathbf{o}$ similarly represent the corresponding joint observation space and joint observation; $\Omega(s, i): \mathcal{S} \times \mathcal{I} \rightarrow \mathcal{O}^i$ denotes observation function of the agent $i$ that can observe $o^i\in\mathcal{O}^i$ in current state $s\in\mathcal{S}$ and we set $\Omega(s): \mathcal{S} \rightarrow \mathcal{O}^1\times...\times\mathcal{O}^N$ for simplicity; $\mathcal{T}(s'|s,\mathbf{a}): \mathcal{S} \times \mathbf{A} \times \mathcal{S} \rightarrow [0, 1]$ denotes the state transition function; $R(s,\mathbf{a}): \mathcal{S} \times \mathbf{A}  \rightarrow \mathbb{R}$ denotes the global reward model and $r_t=R(s_t,\mathbf{a}_t)$ denotes the global reward at time $t$; $\gamma$ is the discounted factor and $\rho_0$ is the initial state distribution. 
In Dec-POMDP, each agent $i$ can only observe $o^i_t$ at each transition time $t$ and execute the action $a^i_t$ according to its own policy $\pi^i(a^i_t|o^i_t):\mathcal{O}^i \times \mathcal{A}^i \rightarrow [0, 1]$. The goal of MARL is to learn the joint policy $\pi^{\mathrm{tot}}=(\pi^1, ..., \pi^N)$ that maximizes the discounted cumulative reward $J_{\pi^{\mathrm{tot}}}=\mathbb{E}_{s_0\sim\rho_0,\pi^{\mathrm{tot}}(\cdot|\Omega(s_t)),\mathcal{T}} [\sum_{t=0}^{\infty}\gamma^{t}r_t]$. 
For the joint policy $\pi^{\mathrm{tot}}$, we have a global Q-value function $Q^{\mathrm{tot}}_{\pi^{\mathrm{tot}}}(\mathbf{o},\mathbf{a})=\mathbb{E}_{\pi^{\mathrm{tot}}(\cdot|\Omega(s_t)), \mathcal{T}}[\sum_{t=0}^{\infty} \gamma^t r_t|\mathbf{o}_0=\mathbf{o},\mathbf{a}_0=\mathbf{a}]$
and its corresponding value function $V^{\mathrm{tot}}_{\pi^{\mathrm{tot}}}(\mathbf{o})= \mathbb{E}_{\mathbf{a} \sim \pi^{tot}} [Q^{\mathrm{tot}}_{\pi^{\mathrm{tot}}}(\mathbf{o},\mathbf{a})]$.
In offline scenarios, we have a static dataset $\mathcal{D}_{\mathrm{off}}=\{ \mathcal{D}^i_{\beta} \}^N_{i=1}$ collected by $N$ agents following behavior joint policy $\pi_{\beta}^{\mathrm{tot}}=(\pi^1_{\beta}, ..., \pi^N_{\beta})$.
Each agent $i$ provides a sub-dataset $\mathcal{D}^i_{\beta}=\{(o_m^{i}, a_m^{i},{o'}_{m}^{i},r_m)\}_{m=1}^{M}$ consisting of $M$ transition tuples.
For the single-agent case, we drop the agent identifier and denote $V_{\pi}(o)$, $Q_{\pi}(o,a)$, $\mathcal{D}_{\beta}$, and $\pi_{\beta}$ for simplicity.

\subsection{Centralized Training with Decentralized Execution}
Centralized Training with Decentralized Execution (CTDE)~\cite{hernandez2019survey} is a widely adopted training paradigm in MARL, where agents are trained jointly and execute independently at inference time. Under CTDE, value decomposition~\cite{sunehag2018value,son2019qtran}, as a commonly used training method, improves scalability by decomposing the joint observation-action space. This method typically relies on the Individual-Global-Max (IGM) principle~\cite{rashid2020monotonic}, which
requires that combining the individually optimal actions implied by each agent’s Q-value function $Q^i_{\pi^i}(o^i,a^i)$ yields the optimal joint action:
\begin{align}
    &\arg\max_{\mathbf{a}}Q^{\mathrm{tot}}_{\pi^{\mathrm{tot}}}(\mathbf{o},\mathbf{a}) \notag \\
    &= ( \arg\max_{a^1}Q^1_{\pi^{1}}(o^1,a^1),...,\arg\max_{a^N}Q^N_{\pi^{N}}(o^N,a^N) ).
\end{align}
The IGM principle guarantees consistency between local optima and the global optimum.

\subsection{Flow Matching and MeanFlow}
Flow Matching~\cite{lipman2023flow} is a generative model that learns a velocity field to match the flow between a prior distribution and a target distribution. Formally, given data $x \sim p_{\mathrm{target}}$ and prior $\epsilon \sim p_{\mathrm{prior}}$ (e.g., $\epsilon \sim \mathcal{N}(0, \mathbf{I})$), we consider a linear schedule flow path $x_t=(1-t)x + t\epsilon$ at time $t \in [0,1]$, which can lead to the sample-conditional velocity $v_c=\epsilon-x$ by computing the time-derivative. 

In Flow Matching, the parameterized velocity network $v_{\theta}$ is optimized by minimizing the following loss function:
\begin{align}
    \mathcal{L}_{\mathrm{FM}}(\theta)=\mathbb{E}_{x,t,\epsilon}|| v_{\theta}(x_t,t)-(\epsilon - x) ||^2 ,
\end{align}
where $t$ is sampled from the uniform distribution (i.e., $ t\sim \mathrm{Unif}([0,1])$) and $\epsilon$ is sampled from Gaussian distribution (i.e., $\epsilon \sim \mathcal{N}(0, \mathbf{I})$).
Since an intermediate sample $x_t$ can be formed as different $(x,\epsilon)$ pairs, Flow Matching essentially learns a marginal velocity field $v(x_t,t)\triangleq \mathbb{E}_{p(x_t|x,\epsilon)}[v_c|x_t]$ over all possibilities $p(x_t|x,\epsilon)$.
The generative process in Flow Matching is described by the ordinary differential equation (ODE) $\frac{d}{dt}x_t=v_{\theta}(x_t,t)$ for $x_t$. This ODE starts from $x_1=\epsilon$ to $x_0=x$.

Unlike Flow Matching that models instantaneous velocity $v(x_t,t)$, MeanFlow~\cite{geng2025mean} defines an average velocity between two time points $t$ and $r$:
\begin{align}\label{define_meanflow}
    u_{\theta}(x_t,r,t) \triangleq \frac{1}{t-r}\int^{t}_{r}v(x_{\tau}, \tau)d\tau.
\end{align}
To learn the average velocity, Meanflow models it with a parameterized network $u_{\theta}$ and trains with the following loss:
\begin{align}
    \mathcal{L}_{\mathrm{MF}}(\theta)=\mathbb{E}_{x,t,r,\epsilon}|| u_{\theta}(x_t,r,t)-\mathrm{sg}(u_{\mathrm{tgt}}) ||^2,
\end{align}
where $ (t,r)\sim \mathrm{Unif}([0,1])$, $\mathrm{sg}$ denotes a stop-gradient operation and $u_{\mathrm{tgt}}=v(x_t,t)-(t-r)\frac{d}{dt}u(x_t,r,t)$ is the target
velocity. To compute the $\frac{d}{dt}u(x_t,r,t)$, MeanFlow further extends this partial derivative and finally implements the calculation using the Jacobian-vector product (JVP): 
\begin{align}
    \frac{d}{dt}u(x_t,r,t)=v(x_t,t)\partial_{x}u(x_t,r,t) + \partial_{t}u(x_t,r,t).
\end{align}
After training, MeanFlow can achieve one-step sampling, $x_0=x_1-u_{\theta}(x_1,0,1)$, by simply setting $(r,t)=(0,1)$.

\subsection{Behavior Regularization in Offline RL}

In single-agent offline RL, policy training is typically achieved by the actor-critic framework under behavior regularization, resulting in a form of constrained policy optimization with behavior policy $\pi_{\beta}$ or offline dataset $\mathcal{D}_{\beta}$:
\begin{align}\label{optim:offline_value}
    \mathcal{L}_{Q_{\pi}}=\mathbb{E}_{(o,a,o',r)\sim \mathcal{D}_{\beta}, \atop a'\sim \pi(\cdot|o')}[Q_{\pi}(o,a)-(r+\gamma Q_{\pi}(o',a'))]^2,\\
    \label{optim:offline_policy}
    \mathcal{L}_{\pi}=-\mathbb{E}_{(o,a)\sim \mathcal{D}_{\beta},\atop a_{\pi}\sim \pi(\cdot|o)} \big [ Q_{\pi}(o,a_{\pi})-\lambda D_{f}(\pi(o|s)||\pi_{\beta}(o|s)) \big],
\end{align}
where $D_{f}$ is used to measure the divergence between the policy $\pi$ and the behavior policy $\pi_{\beta}$.

To prevent excessive access to OOD actions during training, some works focus on weighted behavior cloning, such as AWAC~\cite{peng2019advantage,nair2020awac}, which derives the representation of the optimal policy $\pi^*$ of Eq.~\eqref{optim:offline_policy} when $D_{f}$ is the KL divergence:
\begin{align}\label{optim:offlinePI}
    \pi^*(a|s)=\frac{\exp(\frac{1}{\lambda}Q_{\pi}(s,a))}{\int_{a'} \pi_{\beta}(a'|s) \exp(\frac{1}{\lambda}Q_{\pi}(s,a'))\mathrm{d}a'} \pi_{\beta}(a|s).
\end{align}

\section{Methodology}

In this section, we present our method VGM$^2$P, a simple yet effective way to represent the optimal joint policy through conditional behavior cloning with MeanFlow for offline MARL. 
Based on the IGM principle, the optimal joint action can be derived from the optimal actions of individual agents.
Therefore, to get the optimal joint action, we can first obtain the optimal policy for each agent and then use the IGM principle to derive the joint optimal policy.
To achieve the above objective, VGM$^2$P consists of the following three aspects: 1) deriving the optimal policy for each agent through behavior policy conditioning on the advantage value; 2) modeling these policies with MeanFlow; and 3) obtaining the joint optimal policy based on the IGM principle.

\subsection{Value Guidance Behavior Policy}\label{section:bc_vg}
In single-agent offline RL, policy improvement depends on the behavior policy~\cite{nair2020awac,wangdiffusion,pmlr-v267-park25f}. When the policy is represented as a distribution, the optimal policy and the behavior policy are positively correlated, as shown in Eq.\eqref{optim:offlinePI}. According to Bayes' theorem, there is a similar correlation between the conditional distribution and its corresponding prior distribution. Based on this insight, we can use a conditional behavior policy to approximate the optimal policy:
\begin{proposition}[Value-Guidance Behavior Policy]\label{proposition:conditional behavior policy}
Given a behavior policy $\pi_{\beta}(a|o)$ and the optimal policy $\pi^*(a|o)$ derived from Eq.\eqref{optim:offlinePI}, for any variable $c\in C$ and its related distribution $p(c|o,a)$, when there exists $c^* \in C$ satisfying $p(c=c^*|o,a)\propto \exp(\frac{1}{\lambda}Q_{\pi}(o,a))$, then we have the conditional behavior policy $\pi_{\beta}(a|o,c=c^*)=\pi^*(a|o)$.
\end{proposition}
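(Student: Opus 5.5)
The plan is to apply Bayes' theorem to the conditional behavior policy and then compare the result with the closed form of $\pi^*$ in Eq.~\eqref{optim:offlinePI}. We regard the pair $(a,c)$ as generated jointly, given $o$, by first drawing $a\sim\pi_{\beta}(\cdot|o)$ and then drawing $c\sim p(\cdot|o,a)$. Under this joint model, for any $c$ with positive marginal probability,
\begin{align*}
\pi_{\beta}(a|o,c)=\frac{p(c|o,a)\,\pi_{\beta}(a|o)}{\int_{a'}p(c|o,a')\,\pi_{\beta}(a'|o)\,\mathrm{d}a'}.
\end{align*}

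Next we set $c=c^*$ and use the hypothesis $p(c=c^*|o,a)=Z(o)\exp(\frac{1}{\lambda}Q_{\pi}(o,a))$. Here $Z(o)>0$ is the proportionality factor. The key point is that $Z(o)$ may depend on $o$ but not on $a$; this is the intended reading of ``$\propto$'' with respect to $a$. Substituting into Bayes' formula, the factor $Z(o)$ appears in both numerator and denominator and cancels. What remains is
\begin{align*}
\pi_{\beta}(a|o,c=c^*)=\frac{\exp(\frac{1}{\lambda}Q_{\pi}(o,a))\,\pi_{\beta}(a|o)}{\int_{a'}\pi_{\beta}(a'|o)\exp(\frac{1}{\lambda}Q_{\pi}(o,a'))\,\mathrm{d}a'}.
\end{align*}
This is exactly the expression for $\pi^*(a|o)$ in Eq.~\eqref{optim:offlinePI}, with $s$ written as $o$. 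That establishes the claim.

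The main obstacle is not the algebra but making the hypotheses well posed. Two things need to be checked.
\begin{itemize}
\item \emph{The normalizer is finite and positive.} We need $\int_{a'}\pi_{\beta}(a'|o)\exp(\frac{1}{\lambda}Q_{\pi}(o,a'))\,\mathrm{d}a'$ to lie strictly between $0$ and $\infty$, so that both $\pi^*$ and the conditional are defined. This holds, for instance, if $Q_{\pi}$ is bounded, which follows from bounded rewards and $\gamma<1$.
\item \emph{The proportionality constant is admissible.} Since $p(c^*|o,a)\le 1$, we need $Z(o)\exp(\frac{1}{\lambda}Q_{\pi}(o,a))\le 1$ for all $a$. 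Under bounded $Q_{\pi}$ we can take $Z(o)\le \exp(-\frac{1}{\lambda}\sup_{a}Q_{\pi}(o,a))$. So a valid choice of $p$ exists; for example, $c$ can be a binary optimality variable.
\end{itemize}
I will state these regularity conditions explicitly and note that they are what the proposition implicitly assumes.
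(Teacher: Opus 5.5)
Your proof is correct and follows the paper's own argument: apply Bayes' theorem to write $\pi_{\beta}(a|o,c^*)$ as $p(c^*|o,a)\,\pi_{\beta}(a|o)$ divided by its $\pi_{\beta}$-average, then cancel the proportionality constant to recover Eq.~\eqref{optim:offlinePI}. The paper uses a single constant $k$, whereas you let the factor $Z(o)$ depend on $o$ and spell out the finiteness and $p \le 1$ conditions; these are harmless refinements that make the implicit assumptions explicit.
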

The proof is provided in Appendix~\ref{appendix:proof_conditional behavior policy}. Proposition~\ref{proposition:conditional behavior policy} implies that, when we have a condition $c^*$ positively correlated with the value $\exp(\frac{1}{\lambda}Q_{\pi}(o,a))$ to control behavior policy sampling, we can achieve the same distribution as the optimal policy $\pi^{*}(a|o)$ derived from Eq.\eqref{optim:offlinePI}. 

To achieve $p(c=c^*|o,a)\propto \exp(\frac{1}{\lambda}Q_{\pi}(o,a))$,  we can define the advantage value function $A_{\pi}(o,a)=Q_{\pi}(o,a)-V_{\pi}(o)$ and simply set $c=1$ if $A_{\pi}(o,a) \ge 0$ else $c=0$, which is similar to~\cite{frans2025diffusion} in single-agent scenario. Then, we can define
\begin{align}
    p(c=1|o,a) :=  \frac{\exp(\frac{1}{\lambda}Q_{\pi}(o,a))}{\exp(\frac{1}{\lambda}Q_{\pi}(o,a)) + \exp(\frac{1}{\lambda}V_{\pi}(o))}.
\end{align}
Intuitively, during training, when $(o, a)$ has a non-negative advantage value, we set $c=1$ to indicate that the action $a$ comes from the optimal policy. Then, for the execution, we can fix $c=1$ to sample from the optimal policy.

In multi-agent settings, we can replace the local value function $Q_{\pi^i}^i$ with a global value function $Q_{\pi^{\mathrm{tot}}}^\mathrm{tot}$ and use the global advantage value $A_{\pi^{\mathrm{tot}}}^\mathrm{tot} (\mathbf{o}, \mathbf{a}) = Q_{\pi^{\mathrm{tot}}}^\mathrm{tot}(\mathbf{o}, \mathbf{a})-V_{\pi^{\mathrm{tot}}}^\mathrm{tot}(\mathbf{o})$ as the guidance condition, enabling cooperative policy execution, which we will discuss in detail in Section~\ref{section:mamf_vg}.

\subsection{Value Guidance MeanFlow Policy}\label{section:mf_vg}

To enhance the expressive ability of the policy, we present using MeanFlow to model it. As a specific implementation of Continuous Normalizing Flows (CNFs), MeanFlow has been widely adopted in image generation due to its ability to achieve both high efficiency and high-quality sample generation. 
Specifically, for a single-agent and its observation-action pairs $(o,a)\sim\mathcal{D_{\beta}}$, we construct the action $a_k=(1-k)a+k\epsilon$ at the timestep $k\in[0,1]$ of the flow process along with its sample-conditional velocity $v_{c}(a_k,k|o)=\epsilon-a$ and the parameterized average velocity $u_{\theta}(a_k,r,k|o)=\frac{1}{k-r}\int^{k}_{r}v_{c}(a_k,k|o)$. During training, we formulate a MeanFlow-based behavior cloning loss as follows:
\begin{align}
    \mathcal{L}_{\mathrm{BC-MF}}(\theta)=\mathbb{E}_{(o,a)\sim\mathcal{D_{\beta}}, k,r,\epsilon}|| u_{\theta}(a_k,r,k|o)-\mathrm{sg}(u_{\mathrm{tgt}}) ||^2,
\end{align}
where $(k,r)\sim \mathrm{Unif}([0,1])$, $\epsilon \sim \mathcal{N}(0, \mathbf{I})$ and  $u_{\mathrm{tgt}}=v_{c}(a_k,k|o)-(k-r)\frac{d}{dk}u_{\theta}(a_k,r,k|o)$ is the target
velocity.

Based on the Proposition~\ref{proposition:conditional behavior policy}, the optimal policy derived from Eq.\eqref{optim:offlinePI} can be approximated by a behavior policy augmented with value-guidance conditioning. 
Based on it, we propose the Value-Guided MeanFlow Policy(VGMP), which is trained via a parameterized conditional average velocity $u^c_{\theta}(a_k,r,k|o,c)$ and optimized through behavior cloning under the Classifier-free Guidance (CFG) MeanFlow:
\begin{align}\label{optimal:VGMP}
    \mathcal{L}_{\mathrm{VGMP}}(\theta)=\mathbb{E}_{(o,a)\sim\mathcal{D_{\beta}}, k,r,\epsilon}|| u^c_{\theta}(a_k,r,k|o,c)-\mathrm{sg}(u_{\mathrm{tgt}}^{\mathrm{cfg}}) ||^2,
\end{align}
where $c$ is the value-guidance condition, $u_{\mathrm{tgt}}^{\mathrm{cfg}} = v_{\mathrm{cfg}}(a_k,k|o,c) - (k-r) \frac{d}{dk} u^c_{\theta}(a_k,r,k|o,c)$ is the target velocity and $v_{\mathrm{cfg}}(a_k,k|o,c)=\omega v_c(a_k,k|o,c)+(1-\omega)u^c_{\theta}(a_k,k,k|o)$ is the ground-truth field that a weighted combination of the class-conditional field $v_c(a_k,k|o,c)$ and the class-unconditional field $u^c_{\theta}(a_k,k,k|o)$ with a guidance weight $\omega$ under CFG. Following~\cite{geng2025mean}, we replace the class-conditional field $v_c(a_k,k|o,c)$ with the sample-conditional velocity $v_c(a_k,k|o) = \epsilon - a$ and set the class-unconditional field $u^c_{\theta}(a_k,k,k|o)=u^c_{\theta}(a_k,k,k|o,c=1)$ to allow exploration of the offline dataset's behavior even when using the optimal policy. 

For the execution, we sample $a_1 \sim\mathcal{N}(0, \mathbf{I})$ and set $c=1$, generating each action with the conditional average velocity:
\begin{align}
    a_r=a_k - (k-r) u_{\theta}^c(a_k,r,k|o,1).
\end{align}
To improve generation efficiency, we can directly use one-step sampling, i.e., $a=a_0=a_1 - u_{\theta}^c(a_1,0,1|o,1)$.

\subsection{Value Guidance Multi-agent MeanFlow Policy}\label{section:mamf_vg}

In MARL, policy learning seeks to maximize the global value of joint actions, which requires explicitly obtaining and leveraging the global value during value guidance. 
However, achieving these goals suffers from two key limitations: first, the computational cost of the joint observation-action space typically grows exponentially with the number of agents; second, as a scalar, the global value cannot provide agent-specific guidance to all agents simultaneously.

Under the IGM principle, the joint action that maximizes the global Q-value can be decomposed into actions that individually maximize each agent’s local Q-value. Therefore, to reduce computational cost and satisfy the IGM principle, we replace the global Q-value with individual agents’ Q-values. Specifically, we approximate the global Q-value function $Q^{\mathrm{tot}}_{\pi^{\mathrm{tot}}}(\mathbf{o},\mathbf{a})$ by summing parameterized individual agents’ value functions $Q^{i}_{\phi_i}(o^i,a^i)$, like VDN~\cite{sunehag2018value}, i.e., $Q^{\mathrm{tot}}_{\pi^{\mathrm{tot}}}(\mathbf{o},\mathbf{a})=\sum_{i=1}^{N} Q^i_{\phi_i} (o^i,a^i)$, and then train them using global Temporal-Difference (TD) error:
\begin{align}\label{optim:offlineMARL_value}
    \mathcal{L}(\{\phi_i\}_{i = 1}^N) &= \mathbb{E}_{(\mathbf{o}, \mathbf{a},\mathbf{o}',r) \sim \mathcal{D}_{\mathrm{off}}, \atop \mathbf{a}'^i\sim \pi^{\mathrm{tot}}(\cdot|\mathbf{o}')}  
    \Big [\sum_{i=1}^{N} \bigl( Q^i_{\phi_i} (o^i,a^i) \notag \\
    &- ( r + \gamma Q^i_{\bar{\phi}_i}(o'^i,a'^i) ) \bigr ) \Big]^2, 
\end{align}
where $Q^i_{\bar{\phi}_i}$ denotes a slowly updated target Q-value network for stabilizing training. 

Since we approximate the global Q-value with individual agents’ Q-values, we can use these Q-values to guide the training of conditional behavior cloning. 

\begin{proposition}\label{proposition:policy decomposition} 
Assuming that the behavior joint policy $\pi_{\beta}^{\mathrm{tot}}(\mathbf{a}|\mathbf{o})$ and the global Q-value $Q^{\mathrm{tot}}_{\pi^{\mathrm{tot}}}(\mathbf{o},\mathbf{a})$ are decomposable, i.e., $\pi_{\beta}^{\mathrm{tot}}(\mathbf{a}|\mathbf{o}) = \prod_{i=1}^{N} \pi^{i}_{\beta}(a^i|o^i)$ and $Q^{\mathrm{tot}}_{\pi^{\mathrm{tot}}}(\mathbf{o},\mathbf{a})=\sum_{i=1}^{N} Q^i_{\phi_i} (o^i,a^i)$. 
For the optimal joint policy $\pi^{\mathrm{tot},*}(\mathbf{a}|\mathbf{o})$, when the distribution $p^i(c^i|o^i,a^i)$ satisfies $p^i(c^i=c^{i,*}|o^i,a^i)\propto \exp(\frac{1}{\lambda}Q^i_{\phi_i}(o^i,a^i))$ for each agent $i$, then we have $\prod_{i=1}^{N} \pi^{i}_{\beta}(a^i|o^i,c^i=c^{i,*})=\pi^{\mathrm{tot},*}(\mathbf{a}|\mathbf{o})$.
\end{proposition}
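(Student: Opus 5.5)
The plan is to reduce the joint statement to Proposition~\ref{proposition:conditional behavior policy} applied agent by agent, and then show that the joint optimal policy of Eq.~\eqref{optim:offlinePI} factorizes. We take $\pi^{\mathrm{tot},*}$ to be the multi-agent analogue of Eq.~\eqref{optim:offlinePI} with $\pi_\beta^{\mathrm{tot}}$ and $Q^{\mathrm{tot}}_{\pi^{\mathrm{tot}}}$ in place of $\pi_\beta$ and $Q_\pi$:
\begin{align*}
\pi^{\mathrm{tot},*}(\mathbf{a}|\mathbf{o})=\frac{\exp(\frac{1}{\lambda}Q^{\mathrm{tot}}_{\pi^{\mathrm{tot}}}(\mathbf{o},\mathbf{a}))\,\pi_\beta^{\mathrm{tot}}(\mathbf{a}|\mathbf{o})}{\int_{\mathbf{a}'}\pi_\beta^{\mathrm{tot}}(\mathbf{a}'|\mathbf{o})\exp(\frac{1}{\lambda}Q^{\mathrm{tot}}_{\pi^{\mathrm{tot}}}(\mathbf{o},\mathbf{a}'))\,\mathrm{d}\mathbf{a}'}.
\end{align*}

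\textbf{Factorize the numerator.} Substitute the two decomposability assumptions. The exponential of a sum becomes a product, so the numerator equals $\prod_{i=1}^N \exp(\frac{1}{\lambda}Q^i_{\phi_i}(o^i,a^i))\,\pi^i_\beta(a^i|o^i)$.

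\textbf{Factorize the denominator.} The joint action space is the product $\mathbf{A}=\mathcal{A}^1\times\cdots\times\mathcal{A}^N$, and the integrand is a product of nonnegative functions, each depending on a single $a'^i$. By Fubini/Tonelli the integral therefore equals $\prod_{i=1}^N Z^i(o^i)$, where
\begin{align*}
Z^i(o^i)=\int_{a'^i}\pi^i_\beta(a'^i|o^i)\exp(\tfrac{1}{\lambda}Q^i_{\phi_i}(o^i,a'^i))\,\mathrm{d}a'^i.
\end{align*}
Hence $\pi^{\mathrm{tot},*}(\mathbf{a}|\mathbf{o})=\prod_i \pi^{i,*}(a^i|o^i)$, where $\pi^{i,*}$ is exactly the single-agent form of Eq.~\eqref{optim:offlinePI} with $Q^i_{\phi_i}$ and $\pi^i_\beta$.

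\textbf{Apply the single-agent result per agent.} For each agent $i$, the hypothesis $p^i(c^i=c^{i,*}|o^i,a^i)\propto\exp(\frac{1}{\lambda}Q^i_{\phi_i}(o^i,a^i))$ is precisely the hypothesis of Proposition~\ref{proposition:conditional behavior policy}. It yields $\pi^i_\beta(a^i|o^i,c^i=c^{i,*})=\pi^{i,*}(a^i|o^i)$; concretely, Bayes' rule gives $\pi^i_\beta(a^i|o^i,c^{i,*})=p^i(c^{i,*}|o^i,a^i)\pi^i_\beta(a^i|o^i)/p^i(c^{i,*}|o^i)$, and the proportionality constant cancels against the normalizer. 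Taking the product over $i$ and using the factorization from the previous steps gives the claim.

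\textbf{Main obstacle.} The delicate point is interpretive rather than computational. One must make precise that "$\propto$" means proportional in $a^i$ with a constant depending only on $o^i$, so the normalizations match. One must also fix that the joint optimal policy is defined via the multi-agent version of Eq.~\eqref{optim:offlinePI} with the same $\lambda$ across agents. If instead $\pi^{\mathrm{tot},*}$ were meant to be conditioned on the full $\mathbf{o}$ with local normalizers depending on $\mathbf{o}$, I would check that each $Z^i$ depends only on $o^i$. This holds because $\pi^i_\beta$ and $Q^i_{\phi_i}$ are local, so the factorization is consistent with decentralized execution.
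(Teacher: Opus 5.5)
Your proposal is correct and is essentially the paper's proof run in the opposite direction. The paper starts from $\prod_{i}\pi^{i}_{\beta}(a^i|o^i,c^i=c^{i,*})$, applies Bayes' rule per agent and cancels the constant $k$ (in effect inlining Proposition~\ref{proposition:conditional behavior policy}), then merges the product of exponentials into $\exp(\frac{1}{\lambda}\sum_i Q^i_{\phi_i}(o^i,a^i))$ and the product of per-agent normalizers into one integral over the joint action space, recovering the joint form of Eq.~\eqref{optim:offlinePI}; you instead first factor $\pi^{\mathrm{tot},*}$ using the same exponential and Fubini steps, and then invoke Proposition~\ref{proposition:conditional behavior policy} agent by agent.
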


The proof is provided in Appendix~\ref{appendix:proof_policy decomposition}.
Proposition~\ref{proposition:policy decomposition} indicates that under the value decomposition, the joint behavior policy with value guidance condition is the joint optimal policy $\pi^{\mathrm{tot},*}(\mathbf{a}|\mathbf{o})$. 
Therefore, by considering the optimality of individual agents under the IGM principle, we can achieve the global optimum.

In terms of implementation, we use the advantage value $A^i (o^i, a^i) = Q^i_{\phi_i} (o^i,a^i)-Q^i_{\phi_i} (o^i,\hat{a}^i)$ where $\hat{a}^i \sim \pi^i(\cdot|o^i)$ to set the condition $c^i$ and fix the condition $c^i=1$ during execution for each agent.
In addition, to further improve agent communication and collaboration, we share the parameters of both the policy network and the value function network across all agents.
Combining the use of MeanFlow, we name the above approach Value Guidance Multi-agent MeanFlow Policy (VGM$^2$P), and present the complete training and execution processes in Algorithm~\ref{alg:training} and~\ref{alg:execution}, respectively.

\begin{algorithm}[tb]
    \caption{Centralized Training of VGM$^2$P}
    \label{alg:training}
    \textbf{Input}: Offline MARL dataset $\mathcal{D}_{\mathrm{off}}$; individual conditional average velocity model $\{ u_{\theta_i}^c \}_{i=1}^N$, individual Q-value model $\{ Q_{\phi_i}^i \}_{i=1}^N$; guidance weight $\omega$\\
    \textbf{Output}: $\{ u_{\theta_i}^c \}_{i=1}^N$
    \begin{algorithmic}[1] 
    \STATE Initialize $\{ u_{\theta_i}^c \}_{i=1}^N$ and $\{ Q_{\phi_i}^i \}_{i=1}^N$
    \WHILE{not converged}
        \STATE Sample tuple $\{(o^i,a^i,o'^i,r)\}^{N}_{i=1} \sim \mathcal{D}_{\mathrm{off}}$
        \STATE // Train individual Q-value model $\{ Q_{\phi_i}^i \}_{i=1}^N$
        \STATE Sample $a'^i_1 \sim\mathcal{N}(0, \mathbf{I})$, set $a'^i=a'^i_1-u_{\theta_i}^c(a'^i_1,0,1|o'^i,1)$
        \STATE Train Q-value model $\{ Q_{\phi_i}^i \}_{i=1}^N$ with Eq.~\eqref{optim:offlineMARL_value}
        \STATE // Train individual conditional average velocity model $\{ u_{\theta_i}^c \}_{i=1}^N$
        \STATE Sample $\hat{a}_1^i \sim\mathcal{N}(0, \mathbf{I})$ and generate current action $\hat{a}^i=\hat{a}^i_1-u_{\theta_i}^c(\hat{a}^i_1,0,1|o^i,1)$
        \STATE Compute advantage value $A^i(o^i,a^i)=Q^i_{\phi_i}(o^i,a^i)-Q^i_{\phi_i}(o^i,\hat{a}^i)$ and set the condition $c^i=1$ if $A^i(o^i,a^i) \ge V(o^i)$ else $c^i=0$
        \STATE Sample $a_1^i \sim\mathcal{N}(0, \mathbf{I}),(k,r)\sim \mathrm{Unif}([0,1])$
        \STATE Train conditional average velocity model $\{ u_{\theta_i}^c \}_{i=1}^N$ with Eq.~\eqref{optimal:VGMP}
        \ENDWHILE
    \end{algorithmic}
\end{algorithm}

\begin{algorithm}[tb]
    \caption{Decentralized Execution of VGM$^2$P}
    \label{alg:execution}
    \textbf{Input}: local observation $\{o^i\}_{i=1}^N$, conditional average velocity model $\{ u_{\theta_i}^c \}_{i=1}^N$\\
    \textbf{Output}: $\{a^i\}_{i=1}^N$
    \begin{algorithmic}[1] 
    \STATE Sample $\{a^i_1\}_{i=1}^{N} \sim\mathcal{N}(0, \mathbf{I})$
    \STATE Compute $a^i=a^i_1-u_{\theta_i}^c(a^i_1,0,1|o^i,1)$ for each agent $i$
    \end{algorithmic}
\end{algorithm}

\begin{table*}[t]
  \small
  \center
  \scalebox{0.55}{
  \begin{tabular}{cl|rrrrrr|rrrr}
  \toprule
  \multirow{2}{*}{}&   \multirow{2}{*}{Dataset}&  \multicolumn{6}{c|}{Extension of offline SARL}&  \multicolumn{4}{c}{Offline MARL}\\
  & & BC(Gaussian)& BC(Diffusion)& BC(FM)& BC(MF)& MA-BCQ& MA-CQL& MADiff& DoF& MAC-Flow& VGM$^2$P \\
  \midrule
  \multirow{13}{*}{\rotatebox{90}{SMACv1}}
  & 3m-Good & 16.0$\pm$1.0& 19.5$\pm$0.5& \textbf{20.0$\pm$0.0}& \underline{19.8$\pm$0.4}& 3.7$\pm$1.1& 19.1$\pm$0.1& 19.3$\pm$0.5& \underline{19.8}$\pm$0.2& \underline{19.8$\pm$0.2}& 19.5$\pm$0.7\\ 
  & 3m-Medium & 8.2$\pm$0.8& 13.3$\pm$0.7& 14.7$\pm$1.5& 15.0$\pm$2.8& 4.0$\pm$1.0& 13.7$\pm$0.3& 16.4$\pm$2.6& \textbf{18.6$\pm$1.2}& \underline{18.0$\pm$3.2}& 16.9$\pm$1.1\\
  & 3m-Poor & 4.4$\pm$0.1& 4.2$\pm$0.2& 4.5$\pm$0.1& 4.2$\pm$0.3& 3.4$\pm$1.0& 4.2$\pm$0.1&
  10.3$\pm$6.1& \underline{10.9$\pm$1.1}& 10.6$\pm$2.2& \textbf{14.9$\pm$1.5}\\
  \cmidrule{2-12}
  & 8m-Good & 16.7$\pm$0.4& 19.4$\pm$0.5& 19.5$\pm$0.2& 19.5$\pm$0.6& 4.8$\pm$0.6& 18.9$\pm$0.9& 18.9$\pm$1.1& \underline{19.6$\pm$0.3}& \textbf{19.7$\pm$0.3}& \textbf{19.7$\pm$0.4}\\
  & 8m-Medium & 10.7$\pm$0.5& 18.6$\pm$0.6& 18.2$\pm$0.8& \underline{18.7$\pm$0.8}& 5.6$\pm$0.6& 15.5$\pm$1.5& 16.8$\pm$1.6& 18.6$\pm$0.8& \textbf{19.4$\pm$0.6}& 18.2$\pm$1.6\\
  & 8m-Poor & 5.3$\pm$0.1& 4.8$\pm$0.2& 4.9$\pm$0.1& 4.8$\pm$0.1& 3.6$\pm$0.8& 7.5$\pm$1.0&
  9.8$\pm$0.9& \textbf{12.0$\pm$1.2}& \underline{11.5$\pm$0.8}& 4.9$\pm$0.1\\
  \cmidrule{2-12}
  & 2s3z-Good & 18.2$\pm$0.4& 18.0$\pm$1.0& \underline{19.5$\pm$0.1}& 19.1$\pm$0.9& 7.7$\pm$0.9& 17.4$\pm$0.3& 15.9$\pm$1.2& 18.5$\pm$0.8& \underline{19.5$\pm$0.5}& \textbf{19.9$\pm$0.1}\\
  & 2s3z-Medium & 12.3$\pm$0.7& 13.4$\pm$1.4& 15.1$\pm$2.0& 14.3$\pm$1.8& 7.6$\pm$0.7& 15.6$\pm$0.4& 15.6$\pm$0.3& \textbf{18.1$\pm$0.9}& \underline{17.6$\pm$0.6}& 16.5$\pm$0.6\\
  & 2s3z-Poor & 6.7$\pm$0.3& 6.2$\pm$1.2& 6.9$\pm$0.8& 7.0$\pm$1.0& 6.6$\pm$0.2& 8.4$\pm$0.8&
  \underline{8.5$\pm$1.3}& \textbf{10.0$\pm$1.1}& \underline{8.5$\pm$0.6}& 7.9$\pm$0.7\\
  \cmidrule{2-12}
  & 5m\_vs\_6m-Good & 15.8$\pm$3.6& 16.8$\pm$2.3& 14.7$\pm$2.1& 14.9$\pm$3.2& 2.4$\pm$0.4& 16.2$\pm$1.6& 16.5$\pm$2.8& \underline{17.7$\pm$1.1}& \textbf{18.6$\pm$3.5}& 17.6$\pm$1.3\\
  & 5m\_vs\_6m-Medium & 12.4$\pm$0.9& 12.5$\pm$2.1& 12.8$\pm$0.8& 13.5$\pm$2.2& 3.8$\pm$0.5& 15.1$\pm$2.9& 15.2$\pm$2.6& \underline{16.2$\pm$0.9}& 15.6$\pm$1.3& \textbf{17.0$\pm$0.9}\\
  & 5m\_vs\_6m-Poor & 7.5$\pm$0.2& 8.0$\pm$1.0& 7.7$\pm$0.8& 8.4$\pm$1.1& 3.3$\pm$0.5& 10.5$\pm$3.1& 8.9$\pm$1.3& \textbf{10.8$\pm$0.3}& 9.8$\pm$2.1& \underline{10.7$\pm$1.1}\\
  \cmidrule{2-12}
  & Average&    11.2& 12.9& 13.2& 13.2&  4.7& 13.5& 14.3& \textbf{15.9}& \underline{15.7}&  15.3\\
  \midrule
  \multirow{5}{*}{\rotatebox{90}{SMACv2}} 
  & terran\_5\_vs\_5-Replay& 7.3$\pm$1.0&  9.3$\pm$0.9&  8.3$\pm$1.9&  9.3$\pm$2.0&  13.8$\pm$4.4& 11.8$\pm$0.9& 13.3$\pm$1.8& \underline{15.4}$\pm$1.3& \textbf{16.6$\pm$4.3}& 12.2$\pm$1.8\\
  & zerg\_5\_vs\_5-Replay& 6.8$\pm$0.6&  8.1$\pm$1.7&  4.6$\pm$0.5&  6.2$\pm$0.4& \underline{10.3$\pm$1.2}& \underline{10.3$\pm$3.4}& 10.2$\pm$1.1& \textbf{12.0$\pm$1.1}& 9.8$\pm$1.5& 9.6$\pm$4.1\\
  & terran\_10\_vs\_10-Replay& 7.4$\pm$0.5&  5.5$\pm$1.5&  5.8$\pm$1.7& 5.6$\pm$0.6& 12.7$\pm$2.0&  11.8$\pm$2.0& \underline{13.8$\pm$1.3}&  \textbf{14.6$\pm$1.1}& 13.0$\pm$4.7& 7.7$\pm$0.8\\
  \cmidrule{2-12}
  & Average&  7.2& 7.6& 6.2& 7.0& 12.3& 11.3& 12.4& \textbf{14.0}& \underline{13.1}& 9.8\\
  \bottomrule
  \end{tabular}
  }
  \caption{Comparative performance of VGM$^2$P with discrete actions environment. For the SMACv1 environment, we select 4 tasks, each with 3 datasets of varying quality. For the SMACv2 environment, we select three tasks with only 1 dataset. 
  To distinguish different Behavior Cloning methods and simplify notation, we use \textit{FM} and \textit{MF} to represent Flow Matching and MeanFlow, respectively. We report the average performances and standard deviations of each algorithm across 6 seeds, with the best result in \textbf{bold} and the second-best result \underline{underlined}.}
  \label{result:discrete_action}
  \vskip -0.1in
\end{table*}

\begin{table*}[t]
  \small
  \center
  \scalebox{0.55}{
  \begin{tabular}{cl|rr|rrrrrr}
  \toprule
  \multirow{2}{*}{}&   \multirow{2}{*}{Dataset}&  \multicolumn{2}{c|}{Extension of offline SARL}&  \multicolumn{6}{c}{Offline MARL}\\
  & &  MA-TD3BC& MA-CQL& MA-ICQ& OMAR& OMIGA& MADiff& MAC-Flow& VGM$^2$P \\
  \midrule
  \multirow{13}{*}{\rotatebox{90}{MA-MuJoCo}} 
  & 6Halfcheetah-Expert&   4401.6$\pm$169.1& 4589.5$\pm$98.5& 
  2955.9$\pm$459.2& -206.7$\pm$161.1& 3383.6$\pm$552.7& \underline{4711.4$\pm$213.6}& 4650.0$\pm$271.6& \textbf{4897.5$\pm$114.5}\\
  & 6Halfcheetah-Medium& 2620.8$\pm$69.9& 3189.4$\pm$306.9&  
  2549.3$\pm$96.3& -265.7$\pm$147.0& 3608.1$\pm$237.4& 2650.0$\pm$365.4& \textbf{4358.5$\pm$369.2}& \underline{3684.8$\pm$130.4}\\
  & 6Halfcheetah-MR&      \underline{3528.9$\pm$120.9}& 3500.7$\pm$293.9&
  1922.4$\pm$612.9& -235.4$\pm$154.9& 2504.7$\pm$83.5& 2830.5$\pm$292.8& 3030.2$\pm$436.8&
  \textbf{4068.5$\pm$113.5}\\
  & 6Halfcheetah-ME&      3518.1$\pm$381.0& 4738.2$\pm$181.1&
  2834.0$\pm$420.3& -253.8$\pm$63.9& 2948.5$\pm$518.9& 4410.9$\pm$836.8& \underline{5139.9$\pm$84.1}& \textbf{5159.2$\pm$156.3}\\
  \cmidrule{2-10}
  & 3Hopper-Expert&      3309.9$\pm$4.5& \underline{3359.1$\pm$513.8}&
  754.7$\pm$806.3& 2.4$\pm$1.5& 859.6$\pm$709.5& 2853.3$\pm$593.8& \textbf{3592.1$\pm$8.9}&
  2473.5$\pm$876.6\\
  & 3Hopper-Medium&       870.4$\pm$156.7& 901.3$\pm$199.9&
  501.8$\pm$14.0& 21.3$\pm$24.9& 1189.3$\pm$544.3& \underline{1436.8$\pm$449.5}& 1023.5$\pm$253.0& \textbf{2008.6$\pm$1389.4}\\
  & 3Hopper-MR&           269.7$\pm$41.8& 31.4$\pm$15.2& 
  195.4$\pm$103.6& 3.3$\pm$3.2& 774.2$\pm$494.3& 936.1$\pm$574.0& \underline{1166.3$\pm$451.9}& \textbf{1426.6$\pm$665.5}\\
  & 3Hopper-ME&         2904.3$\pm$477.4& 2751.8$\pm$123.3& 
  355.4$\pm$373.9& 1.4$\pm$0.9& 709.0$\pm$595.7& 2810.4$\pm$723.2& \underline{2988.3$\pm$480.2}& \textbf{3368.5$\pm$403.9}\\
  \cmidrule{2-10}
  & 2Ant-Expert&        2046.9$\pm$17.1& \textbf{2082.4$\pm$21.7}& 
  2050.0$\pm$11.9& 312.5$\pm$297.5& 2055.5$\pm$1.6& \underline{2060.0$\pm$10.3}& \underline{2060.2$\pm$20.0}& \textbf{2083.0$\pm$40.2}\\
  & 2Ant-Medium&        1422.6$\pm$21.1& 1033.9$\pm$66.4& 
  1412.4$\pm$10.9& -1710.0$\pm$1589.0& 1418.4$\pm$5.4& \underline{1428.4$\pm$14.7}& \textbf{1432.4$\pm$17.8}& \underline{1429.4$\pm$15.8}\\
  & 2Ant-MR&            995.2$\pm$52.8& 434.6$\pm$108.3&
  1016.7$\pm$53.5& -2014.2$\pm$844.7& 1105.1$\pm$88.9& 1294.5$\pm$360.2& \textbf{1498.4$\pm$20.3}& \underline{1305.5$\pm$139.1}\\
  & 2Ant-ME&            1636.1$\pm$96.0& 1800.2$\pm$21.5& 1590.2$\pm$85.6& -2992.8$\pm$7.0& 1720.3$\pm$110.6& 1740.2$\pm$158.9& \textbf{2053.3$\pm$20.4}& \underline{1974.7$\pm$116.1}\\
  \cmidrule{2-10}
  & Average&      2293.7& 2367.7& 1511.5& -611.5& 1856.4& 2430.2& 
  \underline{2749.4}& \textbf{2823.3}\\
  \bottomrule
  \end{tabular}
  }
  \caption{Comparative performance of VGM$^2$P with continuous actions environment. For the MA-MuJoCo environment, we select 3 tasks, each with 4 datasets of varying quality. For simplicity, we use \textit{ME} and \textit{MR} to represent Medium-Expert and Medium-Replay, respectively.}
  \label{result:continuous_action}
  \vskip -0.1in
\end{table*}

\section{Related Work}

\subsection{Offline Multi-Agent Reinforcement Learning}
Offline multi-agent reinforcement learning (MARL) extends offline RL from single-agent to multi-agent settings, aiming to enable effective exploration while staying within the offline data distribution and preserving coordination among agents. 
Existing methods typically build on value and policy decomposition, reducing offline MARL to independent offline RL for individual agents.
ICQ~\cite{yang2021believe} and CFCQL~\cite{shao2023counterfactual} leverage conservative Q-learning to improve exploration while maintaining coordination among agents.
OMAR~\cite{pmlr-v162-pan22a} and AlberDICE~\cite{matsunaga2023alberdice} study how multi-agent coordination affects policy improvement, while OMIGA~\cite{wang2023offline} leverages value decomposition to further enhance policy learning.
Additionally, graph-based multi-agent collaboration methods~\cite{ding2023multiagent,liu2024graph,bocheng2025graph} use mechanisms such as graph attention to build the topological structure between agents for communication.
Although these methods have made progress, the complex distributional nature of multi-agent scenarios often leads to improper credit assignment, which can hinder coordination among agents.

\subsection{Diffusion-based and Flow-based RL}
With diffusion and flow-based generative models achieving breakthroughs in image generation~\cite{ho2020denoising,lipman2023flow}, some studies begin applying them to offline RL.
Diffuser~\cite{janner2022planning} and Decision Diffusion~\cite{ajayconditional}  use diffusion models to model trajectories, while methods such as DiffusionQL~\cite{wangdiffusion} model policy. 
Despite their effectiveness, multi-step sampling in the above models significantly raises computational costs, particularly for policy learning requiring multiple iterative rollouts.
To accelerate policy learning under diffusion and flow models, EDP~\cite{kang2023efficient} uses a value-weighted diffusion training paradigm, while FQL~\cite{pmlr-v267-park25f} distills the policy into a one-step generator.
Such techniques have also attracted attention in offline MARL.
MADiff~\cite{zhu2024madiff} extends Decision Diffusion to multi-agent settings via an attention mechanism, generating trajectories that respect coordination constraints. 
DoF~\cite{li2025dof} generalizes value decomposition to distribution decomposition, naturally embedding multi-agent cooperation into diffusion-based generation. 
To improve inference efficiency, MAC-Flow~\cite{lee2025multi} and OM$^2$P~\cite{li2025om2p} extend FQL to multi-agent scenarios and use flow models to represent individual policies.
Additionally, MCGD~\cite{zenggraph} models multi-agent collaboration as a graph and enables communication using discrete and continuous diffusion models for dynamic scenarios.

\begin{figure}[t]
    \centering
    \subfigure[Ma-MuJoCo: 6HalfCheetah (continuous action)]{\includegraphics[width=1.0\linewidth]{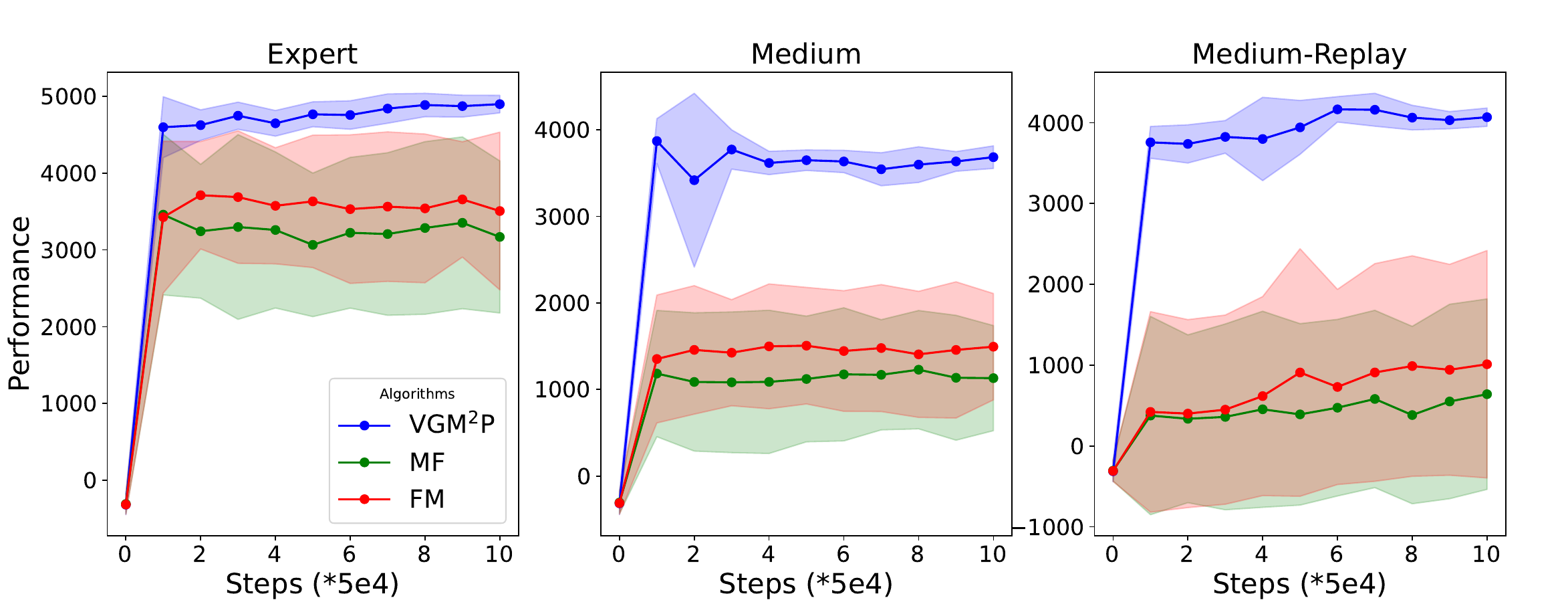}}
    \subfigure[SMACv1: 5m\_vs\_6m (discrete action)]{\includegraphics[width=1.0\linewidth]{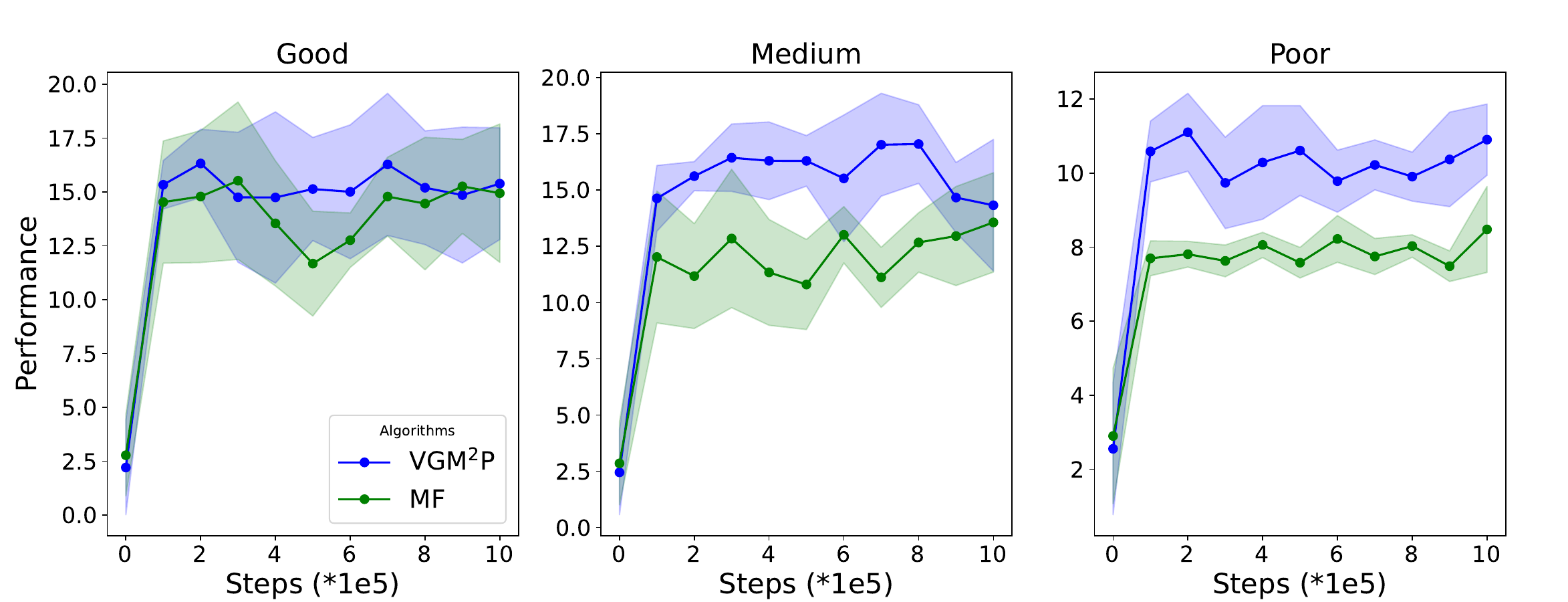}}
    \caption{The training curve between different BC and VGM$^2$P.}
    \label{fig:result_BC}
\end{figure}

\section{Experiments}

In this section, we evaluate the performance of VGM$^2$P by answering the following questions:
\begin{itemize}
    \item How does VGM$^2$P perform compared to flow-based multi-agent behavior cloning?
    \item How does VGM$^2$P perform compared to existing offline MARL methods?
    \item What factors affect the effectiveness of VGM$^2$P?
\end{itemize}

\subsection{Setup}
\noindent \textbf{Benchmarks.} 
We evaluate our method on three widely used MARL benchmarks, including two discrete action environments, StarCraft Multi-Agent Challenge (SMAC) v1 and v2~\cite{samvelyan2019starcraft}, and one continuous action one, Multi-Agent MuJoCo (MA-MuJoCo)~\cite{peng2021facmac}.

\begin{itemize}
    \item \textbf{SMAC} is a real-time combat environment with both homogeneous and heterogeneous unit settings, where agents must cooperate as a team to defeat opponents. There are two versions of datasets available~\cite{frans2025diffusion}: SMACv1 includes three quality datasets for each map, such as \textit{Good}, \textit{Medium}, and \textit{Poor}, while v2 consists of \textit{Replay} datasets with more randomized initial positions and scenarios.
    \item \textbf{MA-MuJoCo} treats the single robot as a collective of multiple agents, requiring collaboration among them to achieve a shared goal. There are four datasets of varying quality for each scenario~\cite{wang2023offline}: \textit{Expert}, \textit{Medium-Expert}, \textit{Medium-Replay}, and \textit{Medium}.
\end{itemize}

\begin{figure}[t]
    \centering
    {\includegraphics[width=0.78\linewidth]{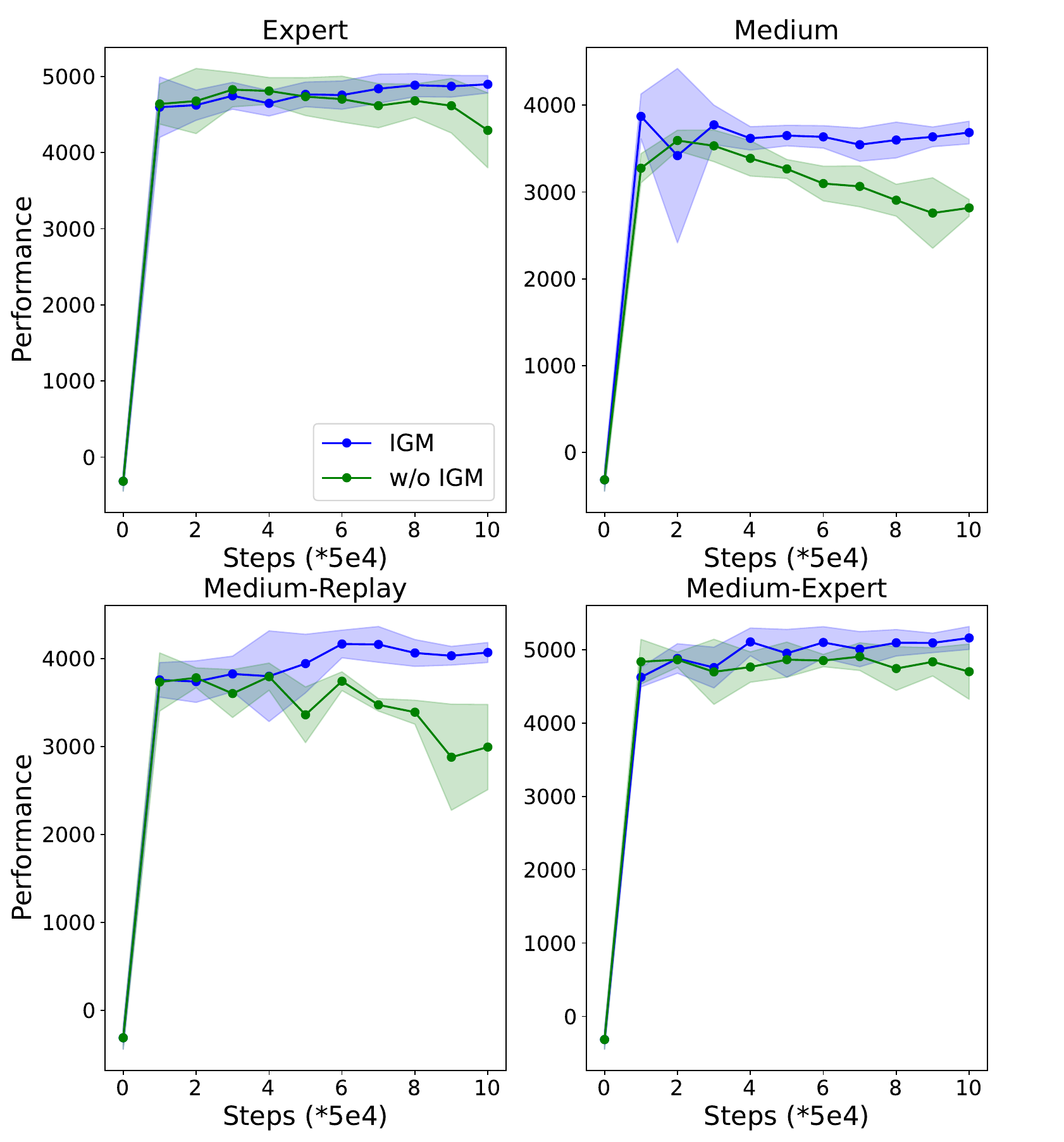}}
    \caption{The training curve for different Q-value training methods of 6HalfCheetah scenarios in MA-MuJoCo.}
    \label{fig:result_ablation_IGM}
\end{figure}

\noindent \textbf{Baselines.}
We compare $10$ representative offline MARL algorithms, covering $3$ categories: extensions of single-agent methods, recent MARL solutions, as well as diffusion- and flow-based methods.
For single-agent methods, we mainly consider BCQ~\cite{fujimoto2019off}, CQL~\cite{kumar2020conservative}, and TD3BC~\cite{fujimoto2021minimalist}. In addition, we include behavior cloning (BC) methods with different modeling paradigms (i.e., Gaussian-based, Diffusion-based, Flow Matching-based, and MeanFlow-based ones) as additional baselines.
For other methods, we consider the following:
\begin{itemize}
    \item ICQ~\cite{yang2021believe} (MARL solutions) leverages implicit conservative Q-learning for training the joint multi-agent value.
    \item OMAR~\cite{pmlr-v162-pan22a} (MARL solutions) optimizes the value function using zero-order optimization.
    \item OMIGA~\cite{wang2023offline} (MARL solutions) introduces local implicit value regularization for policy optimization.
    \item MADiff~\cite{zhu2024madiff} (Diffusion-based MARL) uses the diffusion model to model trajectories and introduces an attention mechanism. 
    \item Dof~\cite{li2025dof} (Diffusion-based MARL) decomposes the centralized diffusion model into multiple independent diffusion models.
    \item MAC-Flow~\cite{lee2025multi} (Flow-based MARL) models policy with flow matching and adopts one-step generation through distillation.
\end{itemize}
We evaluate 10 trajectories for each task and report the results based on experiments conducted with 6 seeds. We provide a detailed experimental introduction in Appendix~\ref{appendix:experiment}.

\subsection{Comparison among Behavior Cloning}

VGM$^2$P is a value-conditioned behavior cloning (BC) method that models the policy using MeanFlow. To provide a clear comparison with traditional BC, we perform unconditional BC using two generative models, Flow Matching (\textbf{FM}) and MeanFlow (\textbf{MF}), and present some comparison results in Figure~\ref{fig:result_BC}. The results show that VGM$^2$P has more advantages than traditional BC in most cases. We attribute this to the fact that, unlike traditional BC, which merely replicates behavior policy, VGM$^2$P can dig more high-reward information with value guidance conditional generation. 

\begin{figure}[t]
    \centering
    \subfigure[Ma-MuJoCo: 6HalfCheetah (continuous action)]{\includegraphics[width=1\linewidth]{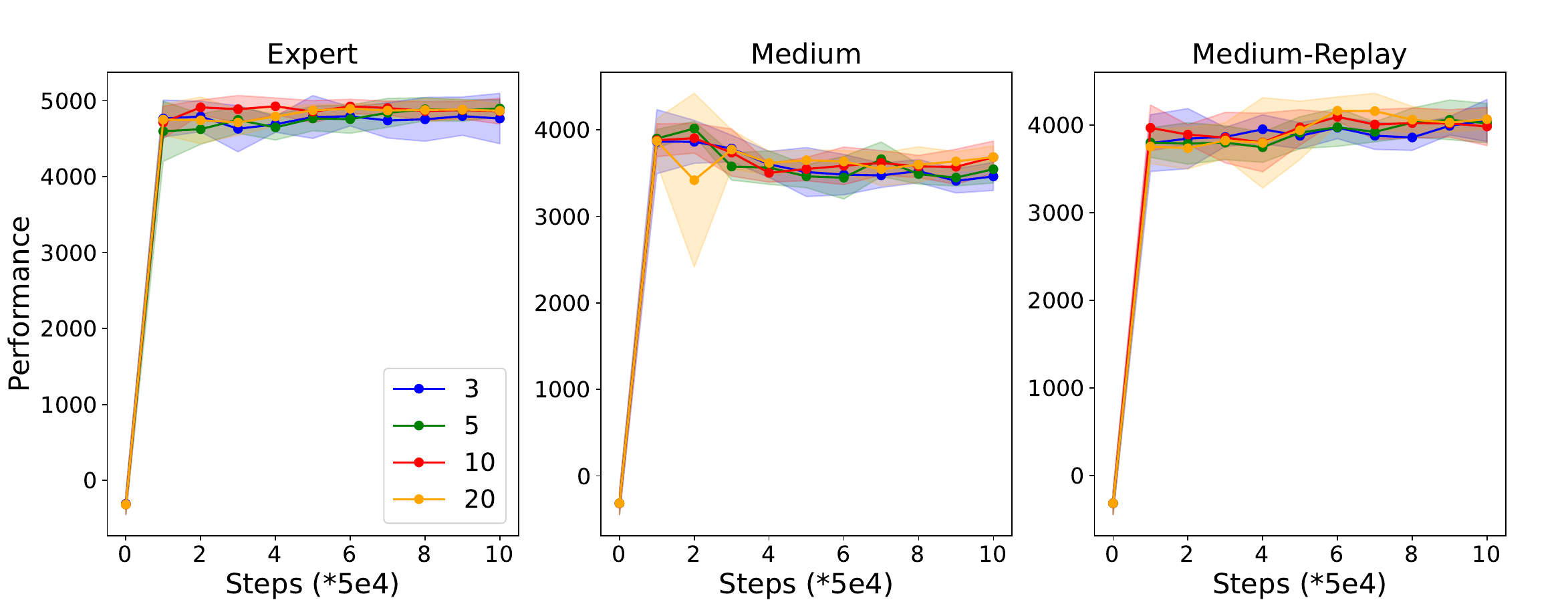}}
    \subfigure[SMACv1: 5m\_vs\_6m (discrete action)]{\includegraphics[width=1\linewidth]{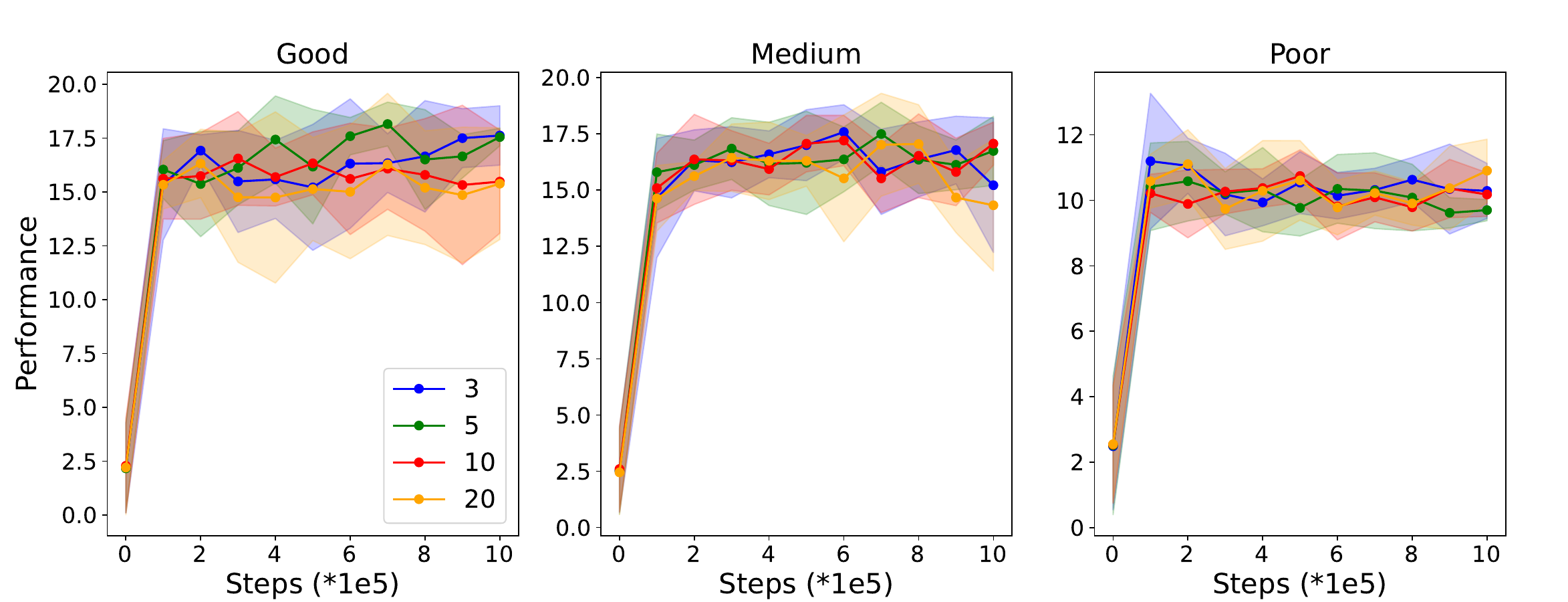}}
    \caption{The training curve for different guidance weights.}
    \label{fig:result_ablation_guidance}
\end{figure}

\subsection{Comparative Evaluation with Offline MARL}

In this experiment, we evaluate VGM$^2$P's performance in both discrete and continuous environments, comparing it with existing offline MARL methods. The results are shown in Table~\ref{result:discrete_action} and~\ref{result:continuous_action}. In simpler discrete-action multi-agent tasks, such as those in SMACv1, VGM$^2$P performs well with conditional BC; however, in SMACv2, it only outperforms traditional BC. We guess this is due to the replay dataset quality in SMACv2 not supporting VGM$^2$P's training with conditional BC. This will be a focus of our future work. To our surprise, VGM$^2$P performs comparably to existing state-of-the-art in continuous scenarios, which strongly validates the effectiveness of value guidance conditional generation. 

\begin{figure}[t]
    \centering
    \includegraphics[width=0.8\linewidth]{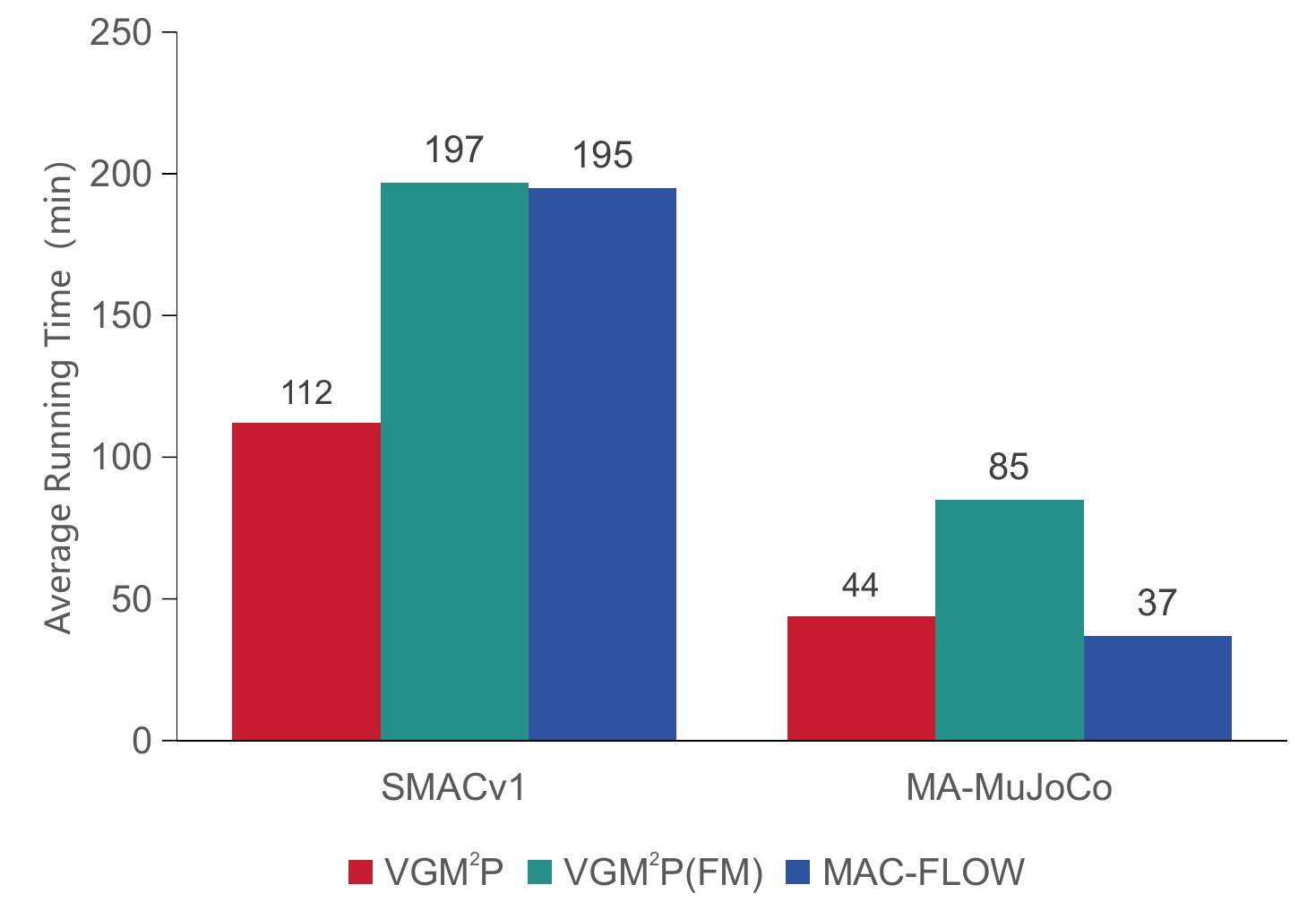}
     \caption{Comparison of running time (minutes). These results are the averages across different tasks in each environment.}
    \label{result:time}
\end{figure}

\subsection{Ablation Study}

\noindent\textbf{Effect of the Q-value training with IGM.} 
To validate the effectiveness of joint Q-value training based on the IGM principle, i.e., training with Eq.\eqref{optim:offlineMARL_value}, we compare its performance with independent training of Q-value (i.e., each agent train Q-value function with Eq.~\eqref{optim:offline_value}), as shown in Figure~\ref{fig:result_ablation_IGM}. The results show that joint training based on the IGM outperforms training independently, especially under the \textit{Medium} and \textit{Medium-Replay} datasets. 
We believe that independently Q-value function training leads multi-agent systems to converge to each local optima, neglecting global optima. In contrast, training based on the IGM principle encourages agents to explore the global optima, especially when offline data quality is low.

\noindent\textbf{Effect of the guidance coefficient.} 
To investigate the sensitivity of VGM$^2$P on the guidance coefficient, we conduct an ablation study to test its performance under different $\omega$ values.
 The result shown in Figure~\ref{fig:result_ablation_guidance} reveals that VGM$^2$P is not sensitive to guidance coefficients within a certain range, and its performance does not degrade significantly with changes in the guidance weight.

\noindent\textbf{The runtime efficiency of VGM$^2$P.} 
To evaluate the efficiency of VGM$^2$P, we compare it with the MAC-Flow, which improves efficiency through distillation and the Flow Matching version of VGM$^2$P, denoted VGM$^2$P(FM), with 10-step sampling for action generation.
The results in Figure~\ref{result:time} show that our method achieves comparable efficiency to MAC-Flow in the Ma-MuJoCo environment and is more efficient in the SMACv1 environment. Additionally, the comparison with Flow Matching highlights that VGM$^2$P's efficiency is due to MeanFlow's 1-step generation.


\section{Conclusion and Discussion}
\label{conclusion and discussion}

In this paper, we propose the value guidance multi-agent MeanFlow policy (VGM$^2$P), which leverages the advantage value 
as condition information and approximates the optimal joint policy through MeanFlow-based conditional behavior cloning. 
Experimental results show that relying solely on conditional behavior cloning, VGM$^2$P achieves performance comparable to state-of-the-art offline MARL methods.
In addition, ablation studies indicate that VGM$^2$P is both efficient and less sensitive to the guidance coefficient. 
While VGM$^2$P has yielded promising results, behavior cloning alone is insufficient for generalization in complex scenarios like SMACv2. Moreover, integrating more effective collaborative methods is expected to enhance VGM$^2$P's performance further. These will be a primary direction for our future work. 

\bibliography{reference}
\bibliographystyle{unsrt}

\clearpage

\appendix

\newtheorem{pro_app}{Proposition}

\onecolumn

\section{Experimental Details}~\label{appendix:experiment}
For the dataset, we primarily use the publicly available dataset library OG-MARL\footnote{https://huggingface.co/datasets/InstaDeepAI/og-marl}~\cite{formanek2024offthegrid}, which includes data from MARL scenarios collected through pretrained policies. Our experiments are implemented in Python with a JAX-based network architecture, and the experimental environment is Ubuntu 22.04. For computational resources, we use an RTX 3090 24GB GPU. Detailed hyperparameter settings are provided in Table~\ref{appendix:table_hyperparmeter}.

\begin{table*}[h]
\begin{center}
\small
\begin{tabular}{ll}
\toprule  
 Hyperparameter& Value\\
\hline 
 Gradient steps&                 $10^6$ (SMACv1 and SMACv2), $5\times 10^5$ (MA-MuJoCo)\\
 Batch Size&                     64\\
 Optimizer&                      Adam\\
 Learning Rate&                  $3\times10^{-4}$\\
 Model Architecture&             MLP\\
 Hidden Layer&                   4\\
 Hidden Dimension&               512\\
 Discount factor&                $0.995$ \\  
 The value of $\omega$&          $[3, 5, 10, 20]$\\
\bottomrule 
\end{tabular}
\end{center}
\caption{Hyperparameter for Meanflow model}
\label{appendix:table_hyperparmeter}
\end{table*}

\section{Proofs}~\label{appendix:proofs}
\subsection{Proof of Proposition \ref{proposition:conditional behavior policy}}~\label{appendix:proof_conditional behavior policy}
\begin{pro_app}[Value-Guidance Behavior Policy]
Given a behavior policy $\pi_{\beta}(a|o)$ and the optimal policy $\pi^*(a|o)$ derived from Eq.\eqref{optim:offlinePI}, for any variable $c\in C$ and its related distribution $p(c|o,a)$, when there exists $c^* \in C$ satisfying $p(c=c^*|o,a)\propto \exp(\frac{1}{\lambda}Q_{\pi}(o,a))$, then we have the conditional behavior policy $\pi_{\beta}(a|o,c=c^*)=\pi^*(a|o)$.
\end{pro_app}
\begin{proof}
    According to Bayes' theorem, we have
\begin{align}
    \pi_{\beta}(a|o,c) 
    = \frac{p_{\beta}(o,a,c)}{p(o,c)} 
    = \frac{p(c|o,a)\pi_{\beta}(a|o)p(o)}{p(c|o)p(o)} 
    = \frac{p(c|o,a)}{p(c|o)} \pi_{\beta}(a|o)
    =  \frac{p(c|o,a)}{\int_{a'} \pi_{\beta}(a'|o) p(c|o,a')\mathrm{d}a'} \pi_{\beta}(a|o).
\end{align}

By comparing Eq.~\eqref{optim:offlinePI}, we find that when there exists $c^* \in C$ satisfying $p(c=c^*|o,a)\propto \exp(\frac{1}{\lambda}Q_{\pi}(o,a))$ (i.e., $p(c=c^*|o,a)=k* \exp(\frac{1}{\lambda}Q_{\pi}(o,a))$, $k$ is a constant), we have: 
\begin{align}
    \pi_{\beta}(a|o,c=c^*) 
    & =  \frac{p(c=c^*|o,a)}{\int_{a'} \pi_{\beta}(a'|o) p(c=c^*|o,a')\mathrm{d}a'} \pi_{\beta}(a|o) \notag \\
    & = \frac{k * \exp(\frac{1}{\lambda}Q_{\pi}(o,a))}{\int_{a'} \pi_{\beta}(a'|o) (k* \exp(\frac{1}{\lambda}Q_{\pi}(o,a')))\mathrm{d}a'} \pi_{\beta}(a|o) \notag \\
    & = \frac{\exp(\frac{1}{\lambda}Q_{\pi}(o,a))}{\int_{a'} \pi_{\beta}(a'|o)  \exp(\frac{1}{\lambda}Q_{\pi}(o,a'))\mathrm{d}a'} \pi_{\beta}(a|o) \notag \\
    & = \pi^*(a|o).
\end{align}
\end{proof}

\subsection{Proof of Proposition \ref{proposition:policy decomposition}}~\label{appendix:proof_policy decomposition}
\begin{pro_app}
Assuming that the behavior joint policy $\pi_{\beta}^{\mathrm{tot}}(\mathbf{a}|\mathbf{o})$ and the global Q-value $Q^{\mathrm{tot}}_{\pi^{\mathrm{tot}}}(\mathbf{o},\mathbf{a})$ are decomposable, i.e., $\pi_{\beta}^{\mathrm{tot}}(\mathbf{a}|\mathbf{o}) = \prod_{i=1}^{N} \pi^{i}_{\beta}(a^i|o^i)$ and $Q^{\mathrm{tot}}_{\pi^{\mathrm{tot}}}(\mathbf{o},\mathbf{a})=\sum_{i=1}^{N} Q^i_{\phi_i} (o^i,a^i)$. 
For the optimal joint policy $\pi^{\mathrm{tot},*}(\mathbf{a}|\mathbf{o})$, when the distribution $p^i(c^i|o^i,a^i)$ satisfies $p^i(c^i=c^{i,*}|o^i,a^i)\propto \exp(\frac{1}{\lambda}Q^i_{\phi_i}(o^i,a^i))$ for each agent $i$, then we have $\prod_{i=1}^{N} \pi^{i}_{\beta}(a^i|o^i,c^i=c^{i,*})=\pi^{\mathrm{tot},*}(\mathbf{a}|\mathbf{o})$.
\end{pro_app}

\begin{proof}
When $\pi_{\beta}^{\mathrm{tot}}(\mathbf{a}|\mathbf{o}) = \prod_{i=1}^{N} \pi^{i}_{\beta}(a^i|o^i)$, $Q^{\mathrm{tot}}_{\pi^{\mathrm{tot}}}(\mathbf{o},\mathbf{a})=\sum_{i=1}^{N} Q^i_{\phi_i} (o^i,a^i)$ and $p^i(c^i=c^{i,*}|o^i,a^i)\propto \exp(\frac{1}{\lambda}Q^i_{\phi_i}(o^i,a^i))$ (i.e., $p^i(c^i=c^{i,*}|o^i,a^i)=k * \exp(\frac{1}{\lambda}Q^i_{\phi_i}(o^i,a^i))$, $k$ is a constant), we have:
\begin{align}
    & \prod_{i=0}^{N} \pi^{i}_{\beta}(a^i|o^i,c^i=c^{i,*}) \notag \\
    &= \prod_{i=0}^{N} \frac{p(c^i=c^{i,*}|o^i,a^i)}{p(c^i=c^{i,*}|o^i)} \pi^{i}_{\beta}(a^i|o^i) \notag \\
    &= \prod_{i=0}^{N}
    \frac{k*\exp(\frac{1}{\lambda}Q^i_{\phi_i}(o^i,a^i))}{\int_{\tilde{a}^i} \pi_{\beta}^i(\tilde{a}^i|o^i) (k* \exp(\frac{1}{\lambda}Q^i_{\phi_i}(o^i,\tilde{a}^i))) \mathrm{d}\tilde{a}^i} \pi^{i}_{\beta} (a^i|o^i) \notag \\
    &= \prod_{i=0}^{N}
    \frac{\exp(\frac{1}{\lambda}Q^i_{\phi_i}(o^i,a^i))}{\int_{\tilde{a}^i} \pi_{\beta}^i(\tilde{a}^i|o^i) \exp(\frac{1}{\lambda}Q^i_{\phi_i}(o^i,\tilde{a}^i)) \mathrm{d}\tilde{a}^i} \pi^{i}_{\beta} (a^i|o^i) \notag \\
    &= \prod_{i=0}^{N} \exp(\frac{1}{\lambda}Q^i_{\phi_i}(o^i,a^i)) \cdot 
    \frac{1}{ \prod_{i=0}^{N} \int_{\tilde{a}^i} \pi_{\beta}^i(\tilde{a}^i|o^i) \exp(\frac{1}{\lambda}Q^i_{\phi_i}(o^i,\tilde{a}^i))\mathrm{d}\tilde{a}^i} \cdot 
    \prod_{i=0}^{N} \pi^{i}_{\beta} (a^i|o^i) \notag \\
    &= \exp( \frac{1}{\lambda} \sum_{i=0}^{N}Q^i_{\phi_i}(o^i,a^i)) \cdot 
    \frac{1}{  \int_{\tilde{a}^1\times...\times\tilde{a}^n} \prod_{i=0}^{N}  \pi_{\beta}^i(\tilde{a}^i|o^i) \exp(\frac{1}{\lambda}Q^i_{\phi_i}(o^i,\tilde{a}^i))\mathrm{d}(\tilde{a}^1\times...\times\tilde{a}^n)} \cdot 
    \prod_{i=0}^{N} \pi^{i}_{\beta}(a^i|o^i) \notag \\
    &= \exp( \frac{1}{\lambda} \sum_{i=0}^{N} Q^i_{\phi_i}(o^i,a^i)) \cdot 
     \frac{1}{  \int_{\tilde{a}^1\times...\times\tilde{a}^n} \Bigl ( \prod_{i=0}^{N}  \pi_{\beta}^i(\tilde{a}^i|o^i) \Bigr ) \exp(\frac{1}{\lambda}\sum_{i=0}^{N}Q^i_{\phi_i}(o^i,\tilde{a}^i))  \mathrm{d}(\tilde{a}^1\times...\times\tilde{a}^n)} \cdot 
    \prod_{i=0}^{N} \pi^{i}_{\beta}(a^i|o^i) \notag \\
    &= \frac{ \exp( \frac{1}{\lambda} Q^{\mathrm{tot}}_{\pi^{\mathrm{tot}}}(\mathbf{o},\mathbf{a}))} {\int_{\tilde{\mathbf{a}}} \pi^{\mathrm{tot}}_{\beta}(\tilde{\mathbf{a}}|\mathbf{o}) \exp( \frac{1}{\lambda} Q^{\mathrm{tot}}_{\pi^{\mathrm{tot}}}(\mathbf{o},\tilde{\mathbf{a}}))  \mathrm{d} \tilde{\mathbf{a}}} \pi^{\mathrm{tot}}_{\beta}(\mathbf{a}|\mathbf{o}) \notag \\
    &= \pi^{\mathrm{tot},*}(\mathbf{a}|\mathbf{o}) 
\end{align}
\end{proof}

\newpage

\section{Learning Curves of VGM$^2$P}

\begin{figure}[h]
    \centering
    \subfigure[SMACv1:3m]{\includegraphics[width=0.5\linewidth]{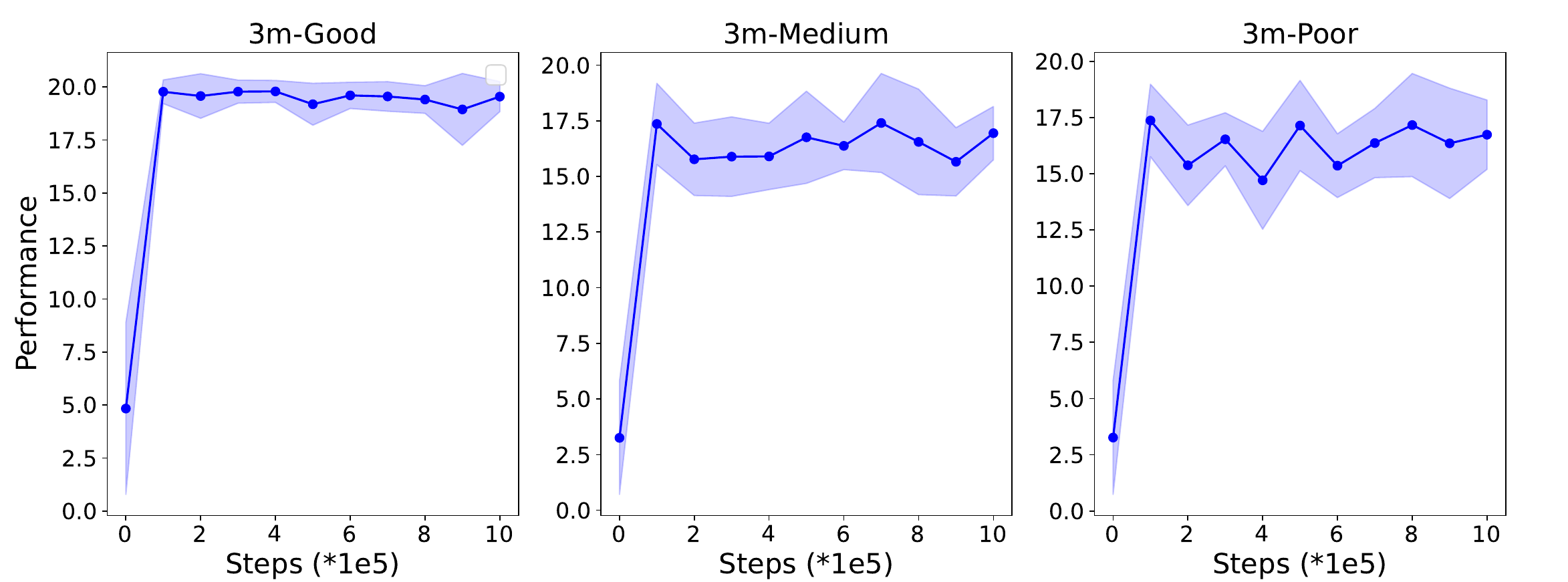}}\subfigure[SMACv1:8m]{\includegraphics[width=0.5\linewidth]{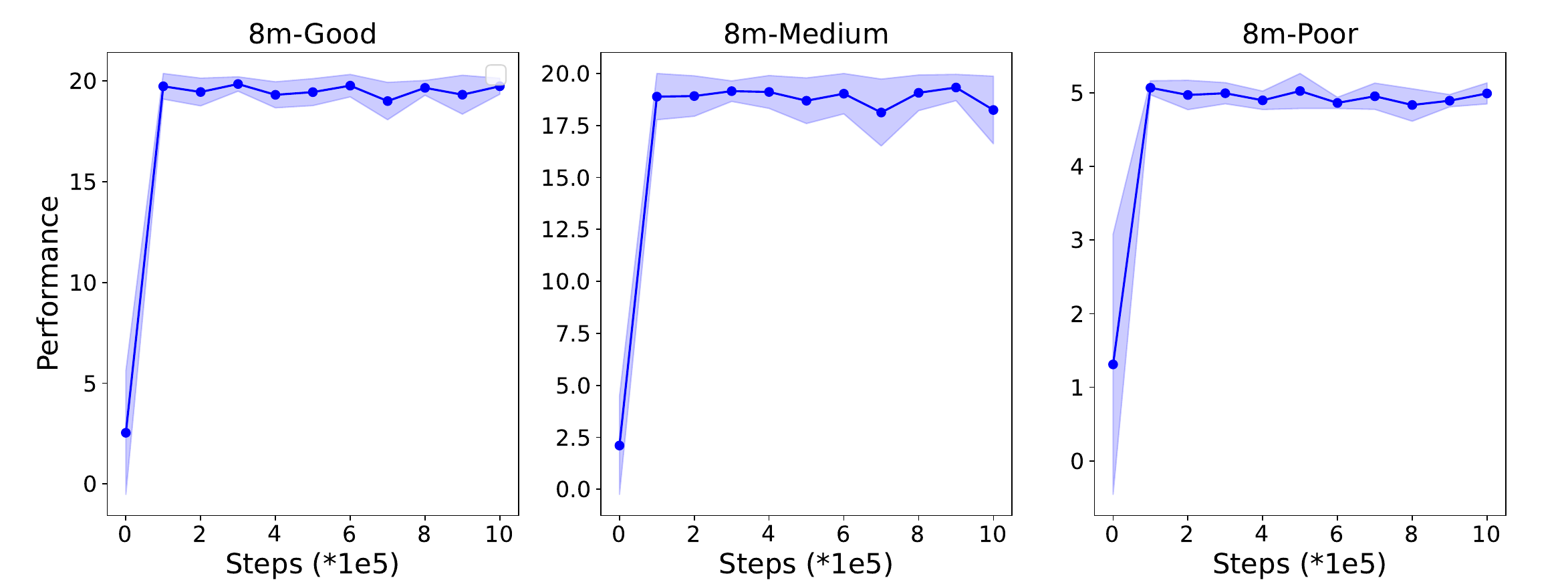}}
    \subfigure[SMACv1:2s3z]{\includegraphics[width=0.5\linewidth]{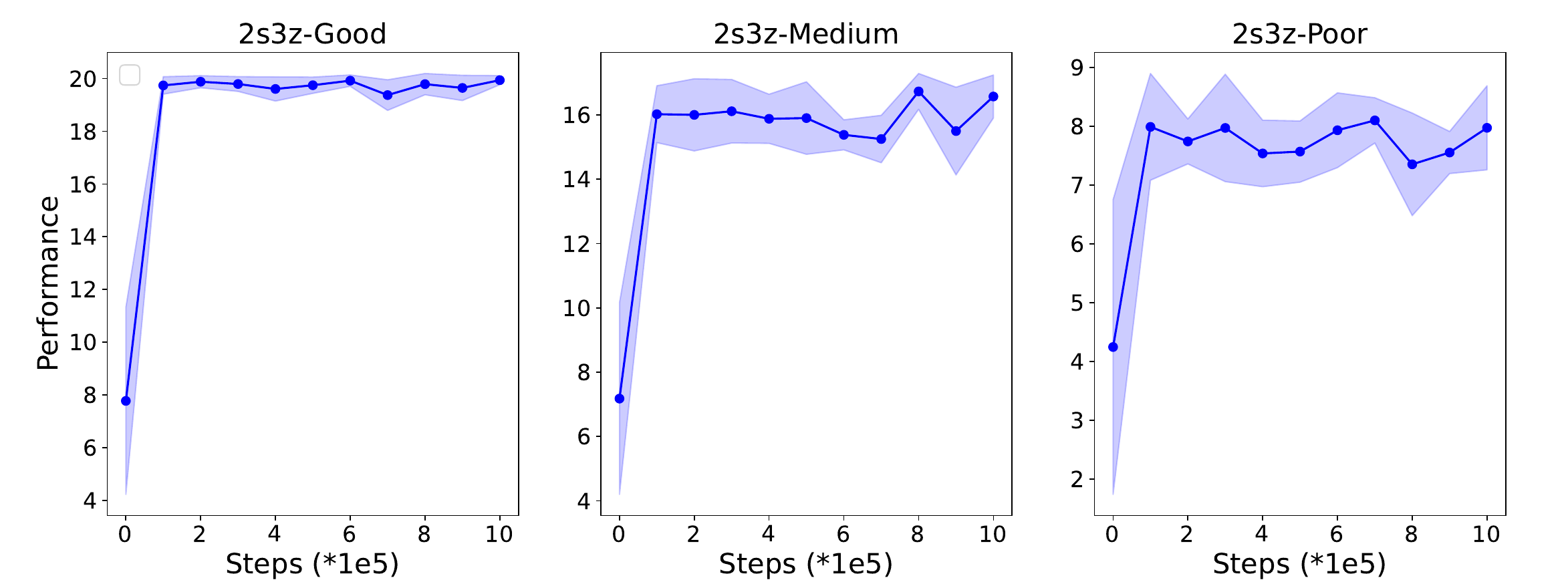}}\subfigure[SMACv1:5m\_vs\_6m]{\includegraphics[width=0.5\linewidth]{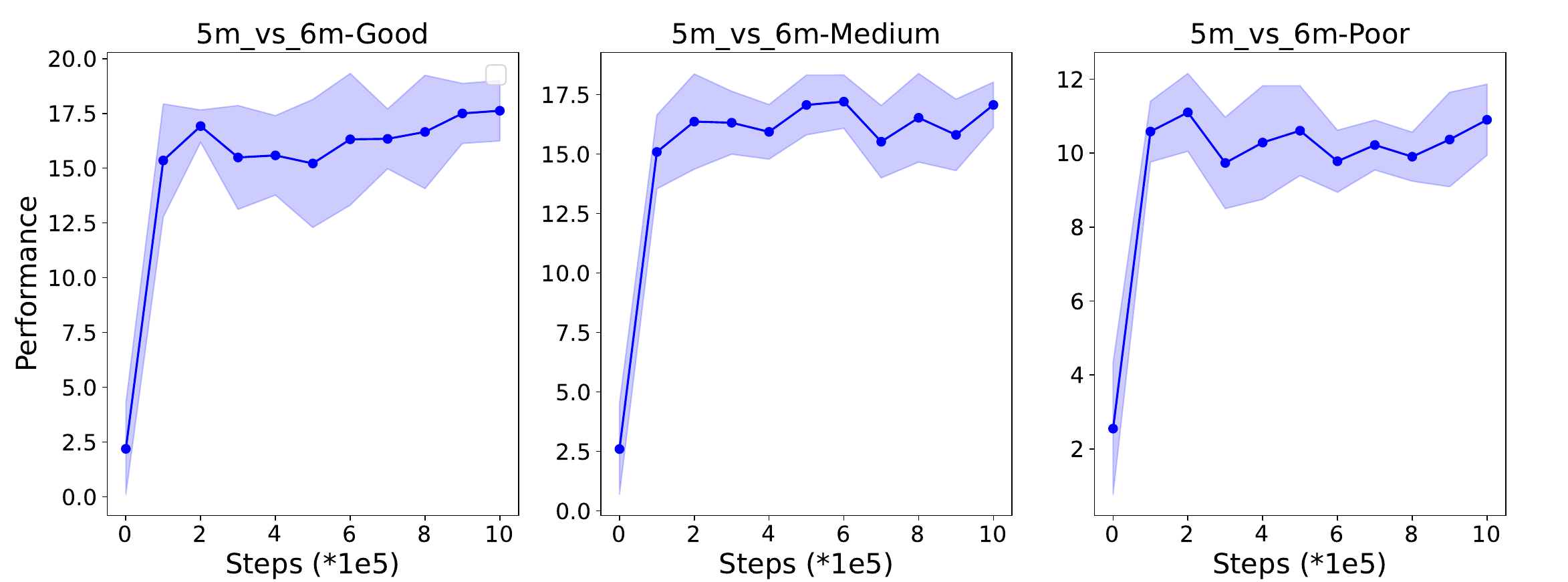}}
    \subfigure[SMACv2]{\includegraphics[width=0.7\linewidth]{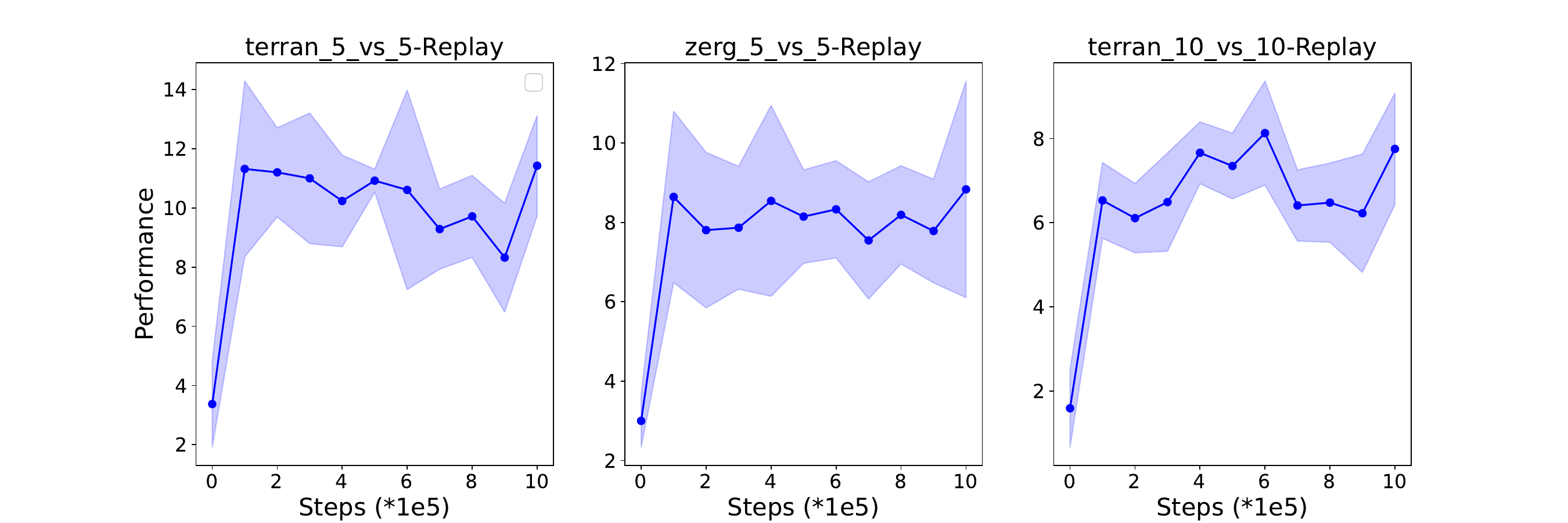}}
    \caption{The training curve for SMAC.}
    \label{fig:result_smac}
\end{figure}

\begin{figure}[h]
    \centering
    \subfigure[6HalfCheetah]{\includegraphics[width=0.8\linewidth]{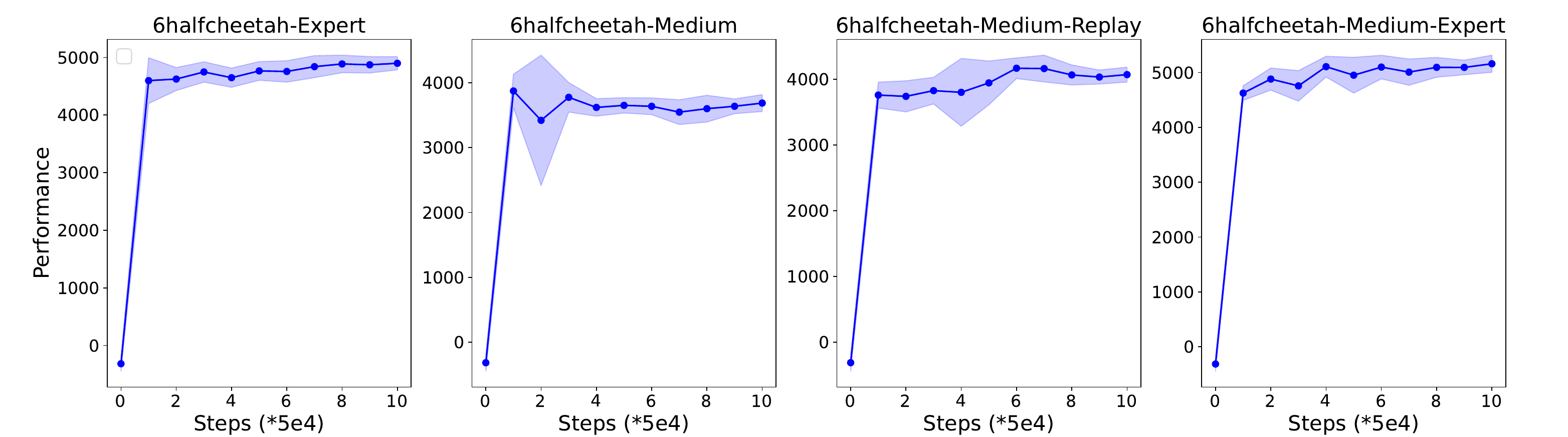}}
    \subfigure[3Hopper]{\includegraphics[width=0.8\linewidth]{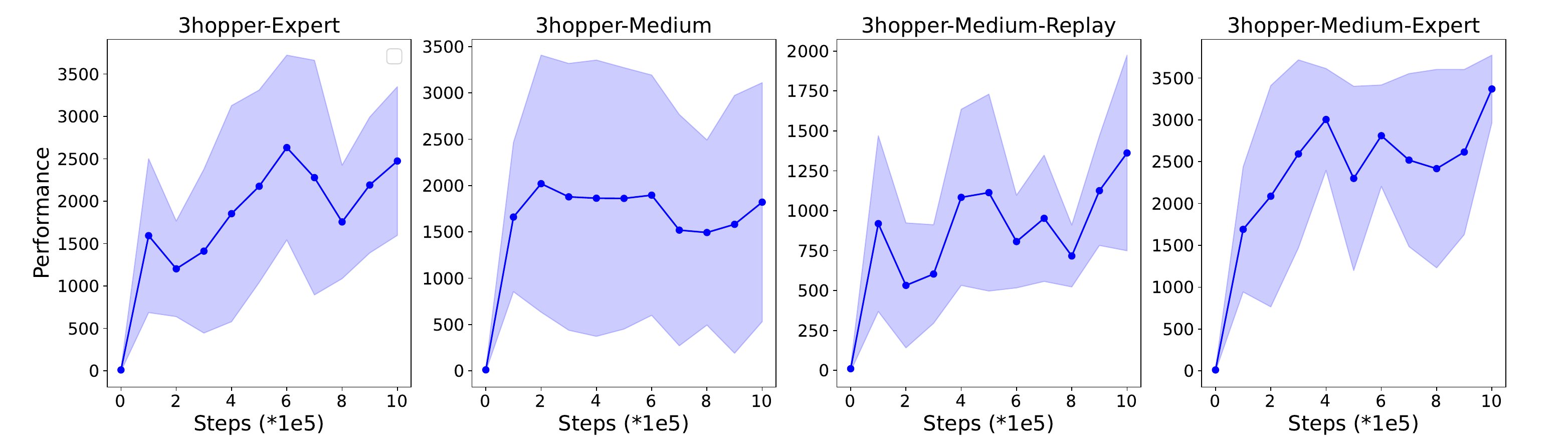}}
    \subfigure[2Ant]{\includegraphics[width=0.8\linewidth]{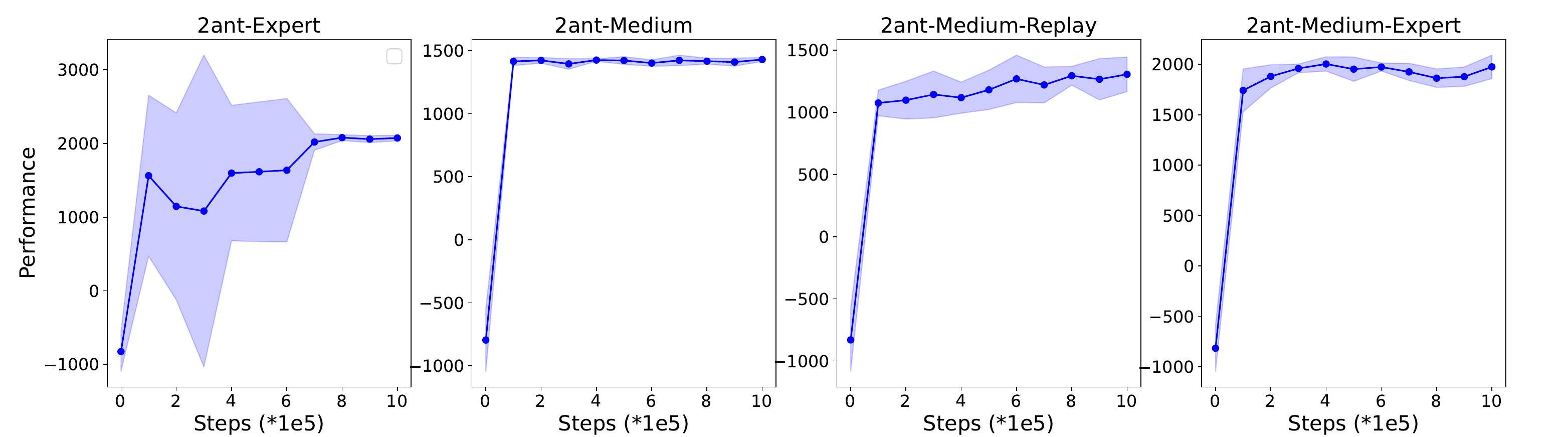}}
    \caption{The training curve for MA-MuJoCo.}
    \label{fig:result_training_mamujoco}
\end{figure}

\end{document}